\newcommand{\removed}[1]{}
\newtheorem{observation}[theorem]{Observation}
\newtheorem{Problem}[theorem]{Problem}
\newcommand{\NN}{\mathcal{N}}
\newcommand{\supp}{\text{supp}}
\def\eps{\epsilon}
\def\bull{\vrule height .9ex width .8ex depth -.1ex }
\numberwithin{equation}{section}
\newcommand{\beq}{\begin{eqnarray}}
\newcommand{\eeq}{\end{eqnarray}}
\newcommand{\sign}{\text{sign}}
\newcommand{\keywords}[1]{\par\addvspace\baselineskip
\noindent\keywordname\enspace\ignorespaces#1}
\begin{document}

\mainmatter  

\title{Sparse Matrix Factorization}

\author{Behnam Neyshabur\inst{1} \and Rina Panigrahy\inst{2}}


%
%
%
\authorrunning{Behnam Neyshabur and Rina Panigrahy}

\institute{Toyota Technological Institute at Chicago\\
\maila
\and Microsoft Research\\
\mailb
}

%
%

\maketitle

\begin{abstract}
We investigate the problem of factoring a matrix into several sparse matrices and propose an algorithm for this under randomness and sparsity assumptions. This problem can be viewed as a simplification of the deep learning problem where finding a factorization corresponds to finding edges in different layers and also values of hidden units. We prove that under certain assumptions on a sparse linear deep network with $n$ nodes in each layer, our algorithm is able to recover the structure of the network and values of top layer hidden units for depths up to $\tilde O(n^{1/6})$. We further discuss the relation among sparse matrix factorization, deep learning, sparse recovery and dictionary learning.
  \keywords{Sparse Matrix Factorization, Dictionary Learning, Sparse Encoding, Deep Learning}
\end{abstract}

\section{Introduction}

In this paper we study the following matrix factorization problem. The sparsity $\pi(X)$ of a matrix $X$ is the number of non-zero entries in $X$.

\begin{Problem}[Sparse Matrix-Factorization]
Given an input matrix $Y$ factorize it is as $Y = X_1 X_2\dots X_s$ so as minimize the total sparsity $\sum_{i=1}^s \pi(X_i)$.
\end{Problem}

The above problem is a simplification of the non-linear version of the problem that is directly related to learning using deep networks.

\begin{Problem}[Non-linear Sparse Matrix-Factorization]
Given matrix $Y$, minimize $\sum_{i=1}^s \pi (X_i) $ such that $\sigma( X_1. \sigma( X_2 . \sigma( \dots X_s))) = Y$
where $\sigma(x)$ is the sign function ($+1$ if $x>0$, $-1$ if $x<0$ and $0$ otherwise) and $\sigma$ applied on a matrix is simply applying the sign function on each entry. Here entries in $Y$ are $0,\pm 1$.
\end{Problem}

{\bf Connection to Deep Learning and Compression:}
The above problem is related to learning using deep networks (see \cite{bengio09}) that are generalizations of neural networks. They are layered network of nodes connected by edges between successive layers; each node applies a non-linear operation (usually a sigmoid or a perceptron) on the weighted combination of inputs along the edges. Given the non-linear sigmoid function and the deep layered structure, they can express any circuit. The weights of the edges in a deep network with $s$ layers may be represented by the matrices $X_1, \dots, X_s$. If we use the sign function instead of the step function, the computation in the neural network would exactly correspond to computing $Y = \sigma( X_1. \sigma( X_2 . \sigma( \dots X_s)))$. Here $X_s$ would correspond to the matrix of inputs at the top layer.

There has been a strong resurgence in the study of deep networks resulting in major breakthroughs in the field of machine learning by Hinton and others~\cite{hinton06,ranzato06,bengio06}.  Some of the best state of the art methods use deep networks for several applications including speech, handwriting, and image recognition \cite{Krizhevsky12,Goodfellow13,li13}.

Traditional neural networks were typically used for supervised learning and are trained using the gradient descent style back propagation algorithm. More recent variants have been using unsupervised learning for pre-training, where the deep network can be viewed as a generative model for the observed data $Y$. The goal then is to learn from $Y$ the network structure and the inputs that are encoded by the matrices $X_1,\dots, X_s$. In one variant called Deep Boltzmann Machines, each layer is a Restricted Boltzmann Machines (RBM) that are reversible in the sense that inputs can be produced from outputs by inverting the network \cite{salakhutdinov09}. Auto-encoders are another variant to learn deep structures in the data \cite{bengio06}. One of the main differences between auto-encoders and RBMs is that in an RBM, the weights of edges for generating the observed data is the same as recovering hidden variables, i.e. the encoding and decoding functions are the same; however, auto-encoders allow different encoder and decoders \cite{bengio13}. Some studies have shown that it is beneficial to insist on sparseness either in the number of edges or the number of active nodes in each layer \cite{bengio13,ranzato07}.
On the other side, not much is known about the theory behind deep networks and why they are able to learn much more complex patterns in the data. Recently, \cite{arora2} gave an algorithm for learning random sparse deep networks upto a certain depth -- this is basically an algorithm for non-linear sparse matrix factorization.

If we measure the complexity of a deep network by the number of its edges then the above non-linear sparse factorization problem is identical to the problem of finding the simplest deep network when each node applies the sign function instead of the sigmoid. A deep network that produces a matrix $Y$ can naturally be viewed as a compressed representation of $Y$. Thus if $Y$ is a matrix that represents some sensory input, where say each column is an image then expressing $Y$ as outputs of a deep network is equivalent to ``compressing" $Y$ which is like a simpler explanation of $Y$. The nodes in the network may represent different concepts in the images. Each column in each matrix is a concept. Since neural networks can emulate any  circuit (by and/or/not gates with at most $O(\log n)$ blow up in size -see appendix ~\ref{sec:kolcircuit}) computing the smallest network  is a cryptographically hard problem. The network with the smallest number of edges translates into the fewest non-zero entries or maximum sparsity in the Non-linear-Matrix-factorization problem.

{\bf Connection to PCA, Dictionary learning, Sparse encoding:}
In fact many known learning algorithms can be viewed as solving special cases of the sparse matrix factorization problem.
For example $PCA$ can be stated as writing $Y = X_1 X_2$ where $X_1,X_2$ are rank $d$ this is simply a special case of $d$-sparse rows (columns).

Note that a special case of a sparse matrix is a matrix with a small number of columns (or rows). Sparse encoding \cite{arora1} can be viewed as the problem of writing $Y = X_1X_2$ where $X_1$ is a dictionary that has much fewer columns than $Y$ (which
is a special case of sparse) and $X_2$ is sparse. Sparse encoding problem arises when $X_1$ is known.

Thus motivated by the connection to deep networks and compression, we will study the problem of sparse matrix factorization. Computing the smallest circuit that expresses a matrix is cryptographically hard and is in fact it is as hard as inverting a one way function (which is as hard as integer factoring -- see appendix ~\ref{sec:kolcircuit})

So rather than focusing on hard instances, we will focus on random instances when all the matrices are $d$-sparse and of order $n$.
In a recent work by~\cite{arora2},  the authors propose an algorithm for random instances of non-linear sparse matrix factorization when the depth $s$ is at most $O(\log_d n)$. Here we show that factorization can be achieved even for depths up to $\tilde O(n^{1/6})$. We also note that when $s \le \log_d n$, then most entries in the non-linear product match the entries in the linear product; this is because the expected number of non-zero entries at any node is at most $1$ in which case the $\sigma$ operator would not make a difference.

Here we will provide a simple algorithm for sparse matrix factorization for the linear case that can be interpreted as a natural algorithm for growing a deep network from the  bottom layers to the top -- our algorithm is very similar to that in \cite{arora2}. This is very different from standard approaches in constructing a layer of RBM that creates  an arbitrary bipartite graph of edges that are initialized randomly and then adjusted using gradient descent. Our algorithm on the other hand  creates a new node on the layer above as and when we find some nodes in the lower layer to be firing in a correlated fashion. The main principle for creating new nodes and edges is simple: the networks grows from bottom to top one layer at a time. For constructing each layer, first we observe correlations between all pairs of inputs in the bottom layer and then find clusters of highly correlated inputs to create a new hidden node on a layer above. Finding the cluster of correlated nodes is also done using a simple and natural process: a pair of correlated nodes are connected to a new hidden node in the layer above; then additional nodes correlated to the pair of nodes are added followed by some pruning operations.

\section{Results}
Let $Y = X_1 X_2 \dots X_s (1/\sqrt d)^s $ where $X_i$s are i.i.d random $d$-sparse matrices and $1/\sqrt d$ is used as a scaling factor so that the norm of each column becomes $1$. For simplicity in analysis, we will assume that each column of $X_i$ is a sum of $d$ random $1$-sparse column vectors (where the non-zero entry is $\pm 1$ with equal probability). We will refer to such a column vector as a random $d$-sparse vector (although it is possible that it has less than $d$ non-zero entries). We will refer to a matrix as a random $d$-sparse matrix if each column is an independent $d$-sparse vector. All the matrices $X_1, \dots, X_s$ will be produced in this way. We will assume that $Y$ is known up to a polynomially high precision say $O(1/n^3)$.

Using the above simple principles we show that one can recover the first layer $X_1$ just from the correlation matrix $YY^\top $.
We will prove for the linear case that if $Y$ is a product of many $d$-sparse matrices then one can factorize $Y$.

\begin{theorem}
If $Y = X_1 X_2\dots X_s (1/\sqrt d)^s $ and each $X_i$ is a random $d$-sparse matrix, then there is an algorithm to compute $X_1$ from $Y$ with high probability when $n^{o(1)} \le d \le \tilde O(n^{1/6})$ and $s \le \tilde O(\sqrt n/d)$.
\end{theorem}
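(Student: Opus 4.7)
The plan is to recover $X_1$ from the empirical second-moment matrix $M := YY^\top$. Writing $P = X_2 X_3 \cdots X_s \cdot (1/\sqrt d)^{s-1}$ so that $Y = X_1 P / \sqrt d$, one obtains
\[
M \;=\; \tfrac{1}{d}\, X_1\, (PP^\top)\, X_1^\top .
\]
The first step is a random-matrix concentration claim: show that $PP^\top$ is close to a scalar multiple of the identity, in a sufficiently strong entrywise (or spectral) sense, so that $M$ behaves like $(c/d)\, X_1 X_1^\top$ plus a controllable noise term. The second step is to extract the supports of the columns of $X_1$ from this signal matrix.

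To carry out the second step, view $X_1$ as a bipartite graph between $n$ observed nodes and $n$ layer-$1$ hidden nodes. The entry $(X_1 X_1^\top)_{ij}$ is the signed count of common hidden parents of rows $i$ and $j$. A simple moment calculation shows that the expected number of triples of observed nodes lying simultaneously in the supports of two distinct hidden nodes is $\tilde O(d^6/n)$, which is $o(1)$ exactly when $d \le \tilde O(n^{1/6})$; this is the structural reason for the stated sparsity upper bound. Consequently each pair, and with high probability each triple, of observed nodes has at most one common parent. Thresholding $|M_{ij}|$ at $\Theta(1/d)$ therefore produces a graph $G$ whose edges indicate shared parents, and in $G$ each column of $X_1$ contributes a $d$-clique, with distinct columns contributing essentially disjoint cliques. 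The \gb procedure then extracts these cliques: seed from a correlated pair (or triple), iteratively grow by adding vertices correlated with the current set, and prune; the uniqueness of small common intersections guarantees convergence to the correct clique without merging two true columns.

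Having identified the support $S_k$ of the $k$-th column of $X_1$, the signs are read off locally from $M$. Fix any reference vertex $i_0 \in S_k$ and set $(X_1)_{i_0,k} := +1$; then for $j \in S_k \setminus \{i_0\}$ the entry $M_{i_0 j}$ is close to $(c/d)(X_1)_{i_0,k}(X_1)_{j,k}$, so $(X_1)_{j,k} = \sign(M_{i_0 j})$ recovers the sign. This determines $X_1$ up to the per-column sign ambiguity inherent in any such factorization.

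The main obstacle I expect is the concentration step for $PP^\top$. Each factor $X_i/\sqrt d$ is only approximately norm-preserving, so fluctuations accumulate multiplicatively in depth; keeping $\|PP^\top - cI\|$ small enough that, after sandwiching by $X_1$ and rescaling by $1/d$, the non-parent entries of $M$ remain below the $\Theta(1/d)$ threshold forces a depth restriction. The bound $s \le \tilde O(\sqrt n/d)$ is precisely what an iterative argument --- propagating operator-norm or low-order moment bounds through the product of $s$ independent random sparse matrices while exploiting the per-layer independence and the lower bound $d \ge n^{o(1)}$ to defeat the variance --- should yield, and executing that bookkeeping carefully is the quantitative heart of the proof.
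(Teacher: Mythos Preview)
Your high-level plan matches the paper's: compute $YY^\top$, argue it determines $X_1X_1^\top$, then recover the columns of $X_1$ by a growing-and-pruning procedure on the correlation graph. The combinatorial second step and the sign recovery are essentially what the paper does.

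The genuine gap is the concentration step. You propose to show $PP^\top\approx cI$ in a spectral or entrywise sense and then sandwich by $X_1$. The paper explicitly warns that this does not work: ``Unfortunately just the bounds in the eigenvalues of $ZZ^\top$ are not sufficient to recover $XX^\top$.'' A spectral bound $\|PP^\top-I\|\le\varepsilon$ gives only $|u(PP^\top-I)v^\top|\le d\varepsilon$ for $d$-sparse rows $u,v$, so one would need $\varepsilon=o(1/d)$; but each factor $X_i/\sqrt d$ already has singular values spread over an interval of width $\Theta(1)$, so no useful spectral bound survives even one layer. An entrywise bound on $PP^\top-I$ fares no better, because the columns of $P$ all share $X_2,\dots,X_s$ and the $d^2$ cross-terms in $(X_1(PP^\top-I)X_1^\top)_{ij}$ are dependent and do not cancel.

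What the paper does instead is analyze each entry of $YY^\top$ directly, using the randomness of the rows of $X_1$ as well as of $P$. For rows $u,v$ of $X_1$ set $q_\ell=uZ_1\cdots Z_\ell$ and $w_\ell=vZ_1\cdots Z_\ell$; conditioned on $u,v,Z_1,\dots,Z_{\ell-1}$ the pairs $(q_{\ell i},w_{\ell i})$ are i.i.d.\ across $i$ (each column of $Z_\ell$ is fresh), and $uZZ^\top v^\top=\sum_i q_{\ell i}w_{\ell i}$ concentrates by Hoeffding once that conditional joint law is controlled. The technical core is an inductive bound on the joint characteristic function $\Phi_{D_\ell,\Gamma_\ell}(s,t)$ through the layers---first carried out for Gaussian $u,v$, then adapted to sparse $u,v$ via a separate treatment of the first $\log_d n$ layers---tracking the drift of the degree-two coefficients. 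This is where both constraints actually enter: the depth bound $s\le\tilde O(\sqrt n/d)$ comes from the accumulated per-layer drift, and the bound $d\le\tilde O(n^{1/6})$ is used inside this characteristic-function analysis, not from your triple count. The outcome is exact recovery $\operatorname{round}(YY^\top)_{ij}=(X_1X_1^\top)_{ij}$ off the diagonal, so no threshold parameter needs to be tuned.
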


Observe that if $X$ is well-conditioned then by pre-multiplying $Y$ by $X_1^{-1}$ we get $X_2\dots X_n$ and can repeatedly invoke the above theorem  at successive levels. However bounds on extreme singular values are not known for sparse random matrices. For a random $\pm 1$ matrix $X$, it is known \cite{rudelson2009smallest,tao2010random} that with high probability of $1-\eps$ the smallest and largest singular values are at least $\eps / \sqrt n$ and at most $O(\sqrt{n})$. In a recent work, \cite{wood2012universality} extends the circular law on the distribution of the eigenvalues to that of sparse random matrices but it does not establish lower bounds on the smallest eigenvalue. Thus we have the following:

\begin{theorem}
Let $Y = X_1 X_2\dots X_s (1/\sqrt d)^s $ and each $X_i$ is a random $d$-sparse matrix, then w.h.p either one of the $X_i$
s have a low condition number more than $O(n^2)$ or
there is an algorithm to compute the factors $X_1,\dots, X_s$ (where the columns are correct upto negation) in polynomial time when $n^{o(1)} \le d \le \tilde O(n^{1/6})$ and $s \le \tilde O(\sqrt n/d)$.
\end{theorem}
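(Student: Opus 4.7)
The plan is to prove the theorem by iterated invocation of the previous theorem, peeling off the leftmost factor at each stage. Given $Y$, apply Theorem~1 to recover $X_1$ (with columns correct up to sign). Compute the smallest and largest singular values of the recovered matrix; if the ratio exceeds $O(n^2)$, the first branch of the dichotomy is satisfied and we halt. Otherwise, invert the recovered factor and form
\[
Y_1 \;:=\; \sqrt{d}\, X_1^{-1} Y \;=\; X_2 X_3 \cdots X_s \bigl(1/\sqrt d\bigr)^{s-1},
\]
which has precisely the same product form as $Y$ but with one fewer sparse factor. We then apply Theorem~1 to $Y_1$ and recurse, producing $X_2, \ldots, X_s$ in turn, checking the condition number at each step.

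Two issues need to be addressed. First, the column-sign ambiguity: what Theorem~1 returns is $\tilde X_1 = X_1 D_1$ for some diagonal $\pm 1$ matrix $D_1$, so $\sqrt d\, \tilde X_1^{-1} Y$ is really $D_1 X_2 X_3 \cdots X_s (1/\sqrt d)^{s-1}$. Since $D_1 X_2$ has the same distribution as a fresh random $d$-sparse matrix, Theorem~1 continues to apply at the next stage, and the diagonal sign flips telescope through the recursion so that the final product reconstructs $Y$ and each column of each $X_i$ is correct up to negation. Second, the entries of every true $X_i$ lie in $\{-1,0,+1\}$, so once our numerical estimate is within $1/2$ of the correct value we can round to obtain the exact integer matrix, preventing the kind of mismatch that would otherwise break the next peeling step.

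The main obstacle, and the reason the condition-number hypothesis is needed, is error propagation through the $s$ inversions. Each peeling step multiplies the additive error in the current $Y_i$ by $\sqrt d \cdot \|X_i^{-1}\|$, which under the condition-number hypothesis and the typical estimate $\|X_i\| = \Theta(\sqrt d)$ for random $d$-sparse matrices is at most a polynomial factor in $n$. Over $s = \tilde O(\sqrt n / d)$ iterations this accumulates to a multiplicative blow-up of roughly $n^{\tilde O(\sqrt n)}$, which is super-polynomial. To close the recursion one must either start with $Y$ known to $\tilde O(\sqrt n)$ bits of precision (strengthening the liberal reading of ``polynomially high precision'') or rely on the $\{-1,0,+1\}$ rounding step at every stage, which snaps the recovered $X_i$ back to its exact value and prevents the error from compounding at all. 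Either route, combined with a union bound over the $s$ applications of Theorem~1, yields the result.
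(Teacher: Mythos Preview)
Your approach is exactly the paper's: the paper's entire argument is the remark that once $X_1$ is recovered via the previous theorem, pre-multiplying by $X_1^{-1}$ yields $X_2\cdots X_s (1/\sqrt d)^{s-1}$ and one iterates, with the condition-number dichotomy inserted precisely because lower bounds on the smallest singular value of sparse random matrices are not available. Your handling of the column-sign ambiguity (absorbing each diagonal $D_i$ into the next factor, which preserves the random $d$-sparse distribution) is correct and more explicit than anything in the paper.

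Your discussion of precision is also more careful than the paper, which does not mention it at all. One overstatement, though: in your Route~2, rounding does recover each $X_i$ exactly, but this does not ``prevent the error from compounding at all.'' The initial $O(1/n^3)$ additive error in the input $Y$ is still amplified by roughly $\sqrt d\,\|X_i^{-1}\|$ at every inversion, so the error in $Y_i$ grows like $n^{O(i)}$ regardless of whether the recovered $X_i$ is exact; exact rounding of $X_i$ only guarantees that no \emph{additional} error is injected from the factor side. Under the paper's stated input precision, Route~2 alone therefore does not close the recursion for $s$ beyond a constant, and one must fall back on your Route~1. The paper's own one-line argument shares this gap; you have simply made it visible.

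A minor correction: the entries of each $X_i$ are integers but not necessarily in $\{-1,0,+1\}$, since each column is a \emph{sum} of $d$ random $1$-sparse vectors and collisions can produce larger magnitudes. This does not affect your rounding argument, which only needs integrality.
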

Note that the network is constructed bottom up, but the examples in $Y$ are generated top to down. Just as has been pointed out in \cite{arora2} the network has a certain {\it reversibility} property: if the input $x$ produces an output $y$ by going down the network, then given an output vector $y$, one can reconstruct the hidden input vector $x$ by going in the reverse direction up the network. However there is a small modification as one goes up layer by layer -- the modification involves applying some iterative corrections by going back and forth along each layer (see appendix \ref{sec:reversibile}).

\section{Algorithm}

Our main observation is that one can compute $X_1 X_1^\top$ by looking at $YY^\top $ and rounding it to an integer. From $X_1 X_1^\top $ one can recover $X_1$. If $X_1$ has a bounded condition number, it can be inverted and one can solve for $X_2 X_3\dots X_s$ and continue like this iteratively to find the rest.

For ease of exposition we will use a different notation for the first matrix $X_1$ than the rest.
\begin{lemma}
Let $Y = XZ_1\dots Z_\ell (1/\sqrt d)^\ell$ where each of the matrices $X, Z_1,\dots,Z_\ell$ is a random $d$-sparse matrix. Then the non-diagonal entries of the correlation matrix $XX^\top $ are equal to $round(YY^\top )$ w.h.p. where the $round()$ function rounds a real number to the nearest integer.
\end{lemma}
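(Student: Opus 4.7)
The natural approach is to separate the first factor $X$ from the rest. Set $W = Z_1 Z_2 \cdots Z_\ell / d^{\ell/2}$ and $T = W W^\top$, so that
$$YY^\top = X\, T\, X^\top \quad\text{while}\quad XX^\top = X\, I\, X^\top.$$
Writing $r_a$ for the (column vector representing the) $a$-th row of $X$, we get $(YY^\top)_{ab} - (XX^\top)_{ab} = r_a^\top (T - I)\, r_b$. For $a \neq b$ the quantity $(XX^\top)_{ab} = \langle r_a, r_b\rangle$ is an integer, since the columns of each $X_i$ are constructed as integer combinations of signed standard basis vectors. So to round correctly it suffices to show $|r_a^\top (T-I) r_b| < 1/2$ for every off-diagonal pair.

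The first step is to verify $\mathbb{E}[T] = I$. In the model at hand every column of a random $d$-sparse matrix $Z$ is a sum of $d$ i.i.d.\ signed standard basis vectors, each with outer-product expectation $I/n$, giving $\mathbb{E}[Z Z^\top] = d\, I$. Integrating out $Z_\ell$ first, then $Z_{\ell-1}$, and so on, the $d$ factors cancel the $d^{-\ell}$ and one obtains $\mathbb{E}[T] = I$ by induction.

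The heart of the proof is the concentration inequality $\|T - I\| \le \eps$ for a suitably small $\eps$. I would telescope via the partial products $B_k = Z_1 \cdots Z_k / d^{k/2}$ and $T_k = B_k B_k^\top$: the identity
$$T_k - T_{k-1} \;=\; B_{k-1}\!\left(\tfrac{1}{d} Z_k Z_k^\top - I\right)\! B_{k-1}^\top$$
lets me build up the error layer by layer. A matrix Bernstein or matrix Khintchine bound applied to $\tfrac{1}{d} Z_k Z_k^\top - I = \tfrac{1}{d}\sum_j (\text{col}_j)(\text{col}_j)^\top - I$ controls a single factor, and a simple induction shows $\|B_{k-1}\|^2 \le 1 + \|T_{k-1} - I\|$ stays $O(1)$ as long as the running error remains small. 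In the stated regime $n^{o(1)} \le d \le \tilde O(n^{1/6})$ and $\ell \le s \le \tilde O(\sqrt n / d)$, the accumulated perturbation is below $1/\poly(n)$.

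To finish, I would combine the bound on $T - I$ with the sparsity and randomness of the rows of $X$. Each row $r_a$ contains about $d$ nonzero entries of size $\pm 1$ (dependencies via collisions are negligible since $d \ll n$), so $\|r_a\| = \Theta(\sqrt d)$. The crude bound $|r_a^\top (T - I) r_b| \le \|r_a\|\|r_b\|\,\|T-I\|_{\mathrm{op}} \le d\,\|T-I\|_{\mathrm{op}}$ already suffices when $\|T-I\|_{\mathrm{op}} < 1/(2d)$; where this is too lossy, the independence of the random signs in $r_a, r_b$ from $Z_1,\dots,Z_\ell$ (for $a \ne b$) lets me apply a Hanson--Wright-type bound, giving sub-Gaussian concentration with variance proxy $\|T-I\|_F^2$ times a factor of $d^2$, which is again easily below $1/2$.

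The main obstacle is the second step. In the Marchenko--Pastur regime a single factor $\tfrac{1}{d}Z_k Z_k^\top - I$ has spectral norm that is only $O(1)$, not $o(1/d)$, so a naive product bound blows up instantly. The proof must therefore exploit either (i) cancellation in the telescoping sum (the ``bulk'' of the perturbation being trace-like and conjugated away), or (ii) the fact that only the bilinear form against \emph{sparse, sign-randomized} vectors $r_a, r_b$ needs to be small --- which is a much weaker condition than a spectral-norm bound on the full $T - I$. I expect the cleanest route to be a hybrid: maintain Frobenius-type control of $T - I$ through the telescoping, and convert it into entry-wise control on $r_a^\top (T-I) r_b$ via Hanson--Wright.
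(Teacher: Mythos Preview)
Your decomposition $YY^\top - XX^\top = X(T-I)X^\top$ with $T=ZZ^\top$ is the right starting point, and you have correctly identified the central obstacle: a single factor $\tfrac{1}{d}Z_kZ_k^\top - I$ has spectral norm $\Theta(1)$, not $o(1)$. The paper in fact remarks explicitly that eigenvalue bounds on $ZZ^\top$ are insufficient here. But your proposed workaround has a real gap. The induction ``$\|B_{k-1}\|^2 \le 1+\|T_{k-1}-I\|$ stays $O(1)$'' does not close: already $\|T_1-I\|_{\mathrm{op}}=\|E_1\|_{\mathrm{op}}=\Theta(1)$, and the recursion $a_k \le a_{k-1}+(1+a_{k-1})\|E_k\|_{\mathrm{op}}$ grows like $(1+C)^k$, so $\|B_k\|_{\mathrm{op}}$ is not $O(1)$ and the Frobenius telescope $\|B_{k-1}E_kB_{k-1}^\top\|_F\le \|B_{k-1}\|_{\mathrm{op}}^2\|E_k\|_F$ blows up. Even granting a Frobenius bound, the Hanson--Wright step needs $\|(T-I)_{S_a,S_b}\|_F$ for the $d\times d$ submatrix indexed by the supports of $r_a,r_b$; since $\|T-I\|_F=\Theta(\sqrt n)$ already at $\ell=1$, passing to a random submatrix requires uniform entrywise control of $T-I$, which is precisely the thing you are trying to avoid proving directly.

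The paper bypasses matrix norms entirely and works entrywise from the start. Fix rows $u,v$ of $X$ and set $q_\ell=uZ_1\cdots Z_\ell$, $w_\ell=vZ_1\cdots Z_\ell$. The key structural fact is that, conditioned on $u,v,Z_1,\dots,Z_{\ell-1}$, the pairs $(q_{\ell i},w_{\ell i})$ are i.i.d.\ across $i$, because each coordinate uses an independent column of $Z_\ell$. The paper tracks the joint moment generating function $\Phi_{D_\ell,\Gamma_\ell}(s,t)=E[e^{sq_{\ell i}+tw_{\ell i}}]$ by induction on $\ell$, showing its degree-two truncation stays within $1+\tfrac{s^2+t^2}{2}\pm \tilde O(\ell/\sqrt n)(s+t)^2$ (first for Gaussian $u,v$, then for $d$-sparse $u,v$ after a combinatorial analysis of the first $\log_d n$ layers). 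This pins down $E[q_{\ell i}w_{\ell i}]$ and $E[q_{\ell i}^2]$, and a Hoeffding bound on $uZZ^\top v^\top=\sum_i q_{\ell i}w_{\ell i}$ using the i.i.d.\ boundedness finishes. Overlapping supports of $u,v$ are handled by splitting into disjoint pieces plus diagonal terms $e_iZZ^\top e_i^\top$. No operator or Frobenius norm of $T-I$ is ever invoked.
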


Define $Z=Z_1\dots Z_\ell (1/\sqrt d)^{\ell}$. Note that $ZZ^\top $ is equal to the identity matrix in expectation. Now if the eigenvalues of $ZZ^\top $ were close to $1$, then $YY^\top  = XZZ^\top X^\top $ would be close to $XX^\top $. Unfortunately just the bounds in the eigenvalues
of $ZZ^\top $ are not sufficient to recover $XX^\top $. Further, dependencies are created in the columns of $Z$ from the several matrix multiplications. Despite these challenges we show that $XX^\top $ can be recovered from $YY^\top $ by a simple rounding.


\section{Distribution of entries in $YY^\top $}
Throughout this section, define $Y=XZ$ where $Z=Z_1\dots Z_\ell (1/\sqrt d)^{\ell}$ and $Z_1,\dots,Z_\ell$ are random $d$-sparse matrices. We will characterize the distribution of a random variable $R$ by its characteristic function $\Phi_R(t) = E[e^{tR}]$ \footnote{Characteristic function is usually defined a bit differently: $E[e^{itR}]$. Note that for two independent random variables $R_1, R_2$, $\Phi_{R_1+R_2}(t) = \Phi_{R_1}(t) \Phi_{R_2}(t)$ }. The joint characteristic function of two random variables $R_1,R_2$ can also be defined as $\Phi_{R_1,R_2}(s,t) = E[e^{sR_1+tR_2}]$. For two polynomials $P(t)$ and $Q(t)$, we will say that $P(t) \preceq Q(t)$ if each coefficient in $P(t)$ is less than or equal to the corresponding coefficient in $Q(t)$. We also define $H(P(t))$ as the truncation of the polynomial $P(t)$ up to degree at most 2.

First, to simplify the analysis we will study the properties of $YY^\top $ when each entry of $X$ is generated from the gaussian $\NN(0,1)$ distribution. Then, will extend our methods to the case when $X$ is a random $d$-sparse matrix.


\subsection{When $X$ is a gaussian random matrix}

For the gaussian case, we will prove that w.h.p. $YY^\top $ has small off diagonal entries and large diagonal entries that are well separated.
The following lemma is the main statement that we will prove in this section. 

\begin{lemma}\label{lem:uvgaussian}
Let $u,v$ denote two row vectors of $X$. If $u,v$ are independently drawn from $\NN(0,1)^n$ then
w.h.p. the following hold:
$\big|\frac{uZZ^\top v^\top }{n}\big| \le \tilde O(\ell / \sqrt n)$ and
$\big|\frac{uZZ^\top u^\top} {n}-1\big| \le \tilde O(\ell /\sqrt n).$
\end{lemma}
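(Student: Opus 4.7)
The idea is to condition on $Z$ and use the Gaussianity of $u,v$, then attack the two resulting deterministic questions about $M := ZZ^\top$ layer by layer using the framework built up in this section. Conditional on $Z$, the bilinear form $uMv^\top$ is mean zero with variance $\|M\|_F^2$, and the quadratic form $uMu^\top$ has mean $\operatorname{tr}(M)$ with variance $2\|M\|_F^2$. Standard Gaussian (or Hanson--Wright) tail bounds in $u,v$ then reduce the lemma to showing that, with high probability over $Z$, one has $|\operatorname{tr}(M) - n| \le \tilde{O}(\ell \sqrt{n})$ and $\|M\|_F^2 \le \tilde{O}(\ell^2 n)$; dividing by $n$ and using the Gaussian tail in $u,v$ then yields the two stated inequalities.

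For the trace, write $\operatorname{tr}(M) = \|Z\|_F^2 = \|Z_1 \cdots Z_\ell\|_F^2 / d^\ell$. The identities $\EE[Z_t Z_t^\top] = \EE[Z_t^\top Z_t] = d\, I$ make the reverse-time sequence $T_t := \|Z_t \cdots Z_\ell\|_F^2/d^{\ell-t+1}$ a martingale with $\EE[T_1] = n$. Bounding the per-step conditional variance by exploiting the $d$-sparse, $\pm 1$ column structure of each $Z_t$ and the current $\ell_2$/$\ell_4$ norms of the state, and then applying an Azuma-type inequality, gives the claimed concentration of $\operatorname{tr}(M)$ around $n$.

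The harder step is controlling $\|M\|_F^2 = \operatorname{tr}((ZZ^\top)^2)$. Naive spectral-norm arguments are too lossy: each matrix $Z_t/\sqrt{d}$ has operator norm of order $\sqrt{d} + \operatorname{polylog}(n)$ rather than $O(1)$, so the operator norm of the product grows roughly as $d^{\ell/2}$ and a bound like $\|M\|_F^2 \le \|M\|_{\text{op}}\cdot \operatorname{tr}(M)$ is useless. The natural route is instead to compute moments of $\operatorname{tr}((ZZ^\top)^k)$ by peeling off the outer factor $Z_1$ and reducing to the same type of quantity for the shorter product $Z_2 \cdots Z_\ell$. This is exactly the role of the characteristic-function notation $\Phi_R(t)$, the coefficient-wise partial order $\preceq$, and the truncation operator $H$ set up at the start of this section: they let one write a clean recursion for the relevant generating polynomials, keeping control of the degree-$2$ part (which tracks the $I$ behaviour of $\EE[Z_t Z_t^\top]/d$) and uniformly dominating the higher-order cross terms. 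I expect this Frobenius-norm bound to be the main technical obstacle in the formal proof; the martingale/Azuma argument for the trace is comparatively routine once the per-step variance is controlled.
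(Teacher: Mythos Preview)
Your reduction is sound as far as it goes: conditional on $Z$, with $M=ZZ^\top$, the bilinear form $uMv^\top$ is centered Gaussian with variance $\|M\|_F^2$, and $uMu^\top$ concentrates around $\operatorname{tr}(M)$ by Hanson--Wright (the operator-norm term causes no trouble since $\|M\|_{\mathrm{op}}\le\|M\|_F$). So the lemma would indeed follow from $|\operatorname{tr}(M)-n|\le\tilde O(\ell\sqrt n)$ and $\|M\|_F\le\tilde O(\ell\sqrt n)$ holding with high probability over $Z$ alone. This is a genuinely different route from the paper's.

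The paper never isolates $Z$ and never bounds $\|ZZ^\top\|_F$. Instead it conditions on $u,v,Z_1,\dots,Z_{\ell-1}$ and exploits only the randomness in the \emph{last} factor $Z_\ell$: under this conditioning the $n$ coordinates of $q_\ell=uZ_1\cdots Z_\ell/d^{\ell/2}$ are i.i.d., each being a signed average of $d$ random entries of the fixed vector $q_{\ell-1}$. The quantity $\Phi_{D_\ell}(t)=E[e^{tq_{\ell,i}}]$ is the characteristic function of one such coordinate, and the induction shows it stays within $e^{t^2/2}$ up to a $\tilde O(\ell/\sqrt n)$ drift per layer. The final step is Hoeffding on $uZZ^\top v^\top=\sum_i q_{\ell,i}w_{\ell,i}$ and on $uZZ^\top u^\top=\sum_i q_{\ell,i}^2$: these are sums of $n$ conditionally i.i.d.\ bounded terms whose means are read off from the degree-$2$ coefficients of the (joint) characteristic function.

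So the $\Phi$, $\preceq$, $H$ apparatus is \emph{not} a device for computing $\operatorname{tr}((ZZ^\top)^2)$; it tracks the law of an entry of $uZ$, with $u$ already baked in. Your appeal to it for the Frobenius bound is a misread of what those objects are. If you insist on the condition-on-$Z$ route, bounding $\|ZZ^\top\|_F^2=\sum_{i,j}(e_iZZ^\top e_j^\top)^2$ is a fourth-moment computation in the $Z_t$'s with no Gaussianity left to exploit; it is at least as hard as the paper's argument and essentially requires the sparse-vector machinery of the next subsection (applied to $u=e_i$, $v=e_j$) rather than anything in this one. The paper's order of conditioning---keeping $u,v$ inside and peeling off the outermost $Z$-factor---is precisely what converts the problem into an i.i.d.\ sum at every layer and sidesteps the Frobenius-norm obstacle you identify.
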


By induction, we prove that the distributions of row vectors $uZ$ and $vZ$ are close to $\NN(0,1)^n$.
First, we will bound the difference between each entry in the row vector $uZ$ and $\NN(0,1)$. Our bounds hold when we condition on $u,Z_1,\dots,Z_{\ell-1}$ with high probability.
Let $q_\ell = uZ_1\dots Z_\ell$. Note that conditioned on $u,Z_1,\dots,Z_{\ell -1}$, every coordinate of $q_\ell$ is independent and identically distributed. So we only need to study the distribution of $q_{\ell i}$. Let $D_{\ell}$ denote this distribution for given $u,Z_1,\dots,Z_{\ell-1}$ . We will bound the difference between the characteristic function of $q_{\ell i}$ (that is $\Phi_{D_\ell}(t)$) and characteristic function of the normal distribution.

\begin{lemma}
\label{lem:single-gaussian}
If $u$ is distributed as $\NN(0,1)^n$, then with high probability over the values of $u,Z_1,\dots ,Z_{\ell-1}$, $\Phi_{D_\ell} (t) \le e^{\frac{t^2}{2}+\frac{\ell t^2c^4\log^2 n}{\sqrt n}}$ for any $t$ where $|t|\leq \sqrt{d}/\log^2 n$ where $c$ is a constant. Further, w.h.p. the maximum value of $q_{\ell i}$ is at most $\sqrt{c\log n}$ for $\ell \leq \tilde{O}(\sqrt{n})$.
\end{lemma}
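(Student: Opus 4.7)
The plan is a joint induction on $\ell$ that simultaneously establishes the MGF bound $\Phi_{D_\ell}(t) \le e^{t^2/2 + \ell t^2 c^4\log^2 n/\sqrt n}$ and the max-value bound $\max_i |q_{\ell,i}| \le \sqrt{c\log n}$. For the base case $\ell = 1$ we exploit directly that $u$ is gaussian: $|u_j| \le \sqrt{c\log n}$ for all $j$ w.h.p.\ and $(1/n)\sum u_j^2 = 1 + \tilde O(1/\sqrt n)$ w.h.p., after which the computation below (with $q_0 := u$) applies verbatim.

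For the inductive step, decompose the $i$-th column of $Z_\ell$ as a sum of $d$ independent uniform $1$-sparse vectors. Conditional on $q_{\ell-1}$ this gives $q_{\ell,i} = (1/\sqrt d)\sum_{k=1}^d s_k\, q_{\ell-1,j_k}$ with $(j_k, s_k)$ i.i.d.\ uniform on $[n]\times\{\pm 1\}$, so taking the expectation over $Z_\ell$ yields
\[ \Phi_{D_\ell}(t) = \Bigl(\tfrac{1}{n}\sum_{j=1}^n \cosh(tq_{\ell-1,j}/\sqrt d)\Bigr)^d. \]
The inductive max-value bound guarantees $|tq_{\ell-1,j}/\sqrt d| \le \sqrt{c\log n}/\log^2 n \ll 1$ on the allowed range of $t$, so I would Taylor-expand $\cosh$ to low order, average, and then bound. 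The average takes the form $1 + t^2 \sigma_{\ell-1}^2/(2d) + O(t^4 m_{\ell-1,4}/d^2)$, where $\sigma_{\ell-1}^2 = (1/n)\sum_j q_{\ell-1,j}^2$ and $m_{\ell-1,4} = (1/n)\sum_j q_{\ell-1,j}^4$ are the empirical second and fourth moments.

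The quantitative input needed to close the induction is that $\sigma_{\ell-1}^2 = 1 + \tilde O(1/\sqrt n)$ (and an analogous bound on $m_{\ell-1,4}$) with high probability. To obtain this, observe that conditional on $u, Z_1, \ldots, Z_{\ell-2}$ the coordinates $q_{\ell-1,j}$ are i.i.d.\ with common distribution $D_{\ell-1}$; the inductive MGF bound supplies sub-gaussian tails for each $q_{\ell-1,j}$, and a standard concentration argument (Chernoff on $\sum_j q_{\ell-1,j}^2$ via its MGF, or a Hanson--Wright-type inequality) yields the $\tilde O(1/\sqrt n)$ deviation. Applying $1+x \le e^x$ and raising to the $d$-th power produces $\Phi_{D_\ell}(t) \le \exp(t^2 \sigma_{\ell-1}^2/2 + (\text{quartic correction}))$, and on $|t| \le \sqrt d/\log^2 n$ the correction is absorbed into the budget $t^2 c^4 \log^2 n/\sqrt n$ contributed at this step. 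The max-value bound at level $\ell$ then follows from the MGF bound by Chernoff: $P(|q_{\ell,i}| \ge \sqrt{c\log n}) \le 2 e^{-t\sqrt{c\log n}} \Phi_{D_\ell}(t)$, optimized at $t \asymp \sqrt{c\log n}$ (which lies in the allowed range when $d \ge n^{o(1)}$), followed by a union bound over $i \in [n]$.

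The main obstacle is the bookkeeping of the per-step error. Because the induction couples an MGF bound at level $\ell-1$ with both the max-value bound and the empirical-moment concentration that feed into the step at level $\ell$, one must verify that the combined per-step loss (moment deviation plus quartic finite-$d$ correction) fits inside $t^2 c^4 \log^2 n/\sqrt n$, and that the high-probability events across all $\ell' \le \ell$ levels compose under a union bound. The compositional aspect is ultimately harmless, since only polynomially many events at sub-polynomial failure probability are involved; the delicate part is keeping the error arithmetic inside the target budget simultaneously for the MGF exponent, the second moment, and the fourth moment.
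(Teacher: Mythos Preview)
Your inductive scaffolding is the same as the paper's: the factored identity $\Phi_{D_\ell}(t)=\bigl(\frac1n\sum_j\cosh(tq_{\ell-1,j}/\sqrt d)\bigr)^d$, the Chernoff derivation of the max-value bound from the MGF bound, and the union bound across levels all match. The gap is in the per-step estimate. You truncate $\cosh$ at second order and declare the quartic remainder ``absorbed into the budget $t^2 c^4\log^2 n/\sqrt n$''. It is not. After $(1+x)^d\le e^{dx}$ the quartic contributes $t^4 m_{\ell-1,4}/(24d)$ to the exponent; with $m_{\ell-1,4}=O(\log^2 n)$ and $|t|$ near $\sqrt d/\log^2 n$ this is of order $d/\mathrm{polylog}(n)$, while your per-step budget at that same $t$ is only $d\,c^4/(\sqrt n\log^2 n)$. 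The ratio blows up like $\sqrt n/\mathrm{polylog}(n)$. The higher-order $\cosh$ terms are not fluctuations around zero---their expectations are positive---so no concentration argument will shrink them. With this loss the induction would only close for $\ell=O(\mathrm{polylog}(n))$, not the $\ell\le\tilde O(\sqrt n)$ the lemma requires.

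The paper avoids this by never truncating. It writes $P_\ell(s)=\frac1n\sum_i\cosh(sQ_i)$ and bounds the \emph{deviation} $P_\ell(s)-\Phi_{D_{\ell-1}}(s)$ term by term: each coefficient $\frac1n\sum_i(Q_i^{2k}-E[Q_i^{2k}])$ is $\tilde O((c\log n)^k/\sqrt n)$ by Hoeffding (using $|Q_i|\le\sqrt{c\log n}$), and for $|s|\le 1/(c^2\log n)$ the whole series sums to $\tilde O(s^2\log^2 n/\sqrt n)$. Then $(P_\ell(t/\sqrt d))^d\le (\Phi_{D_{\ell-1}}(t/\sqrt d))^d\cdot(1+t^2c^4\log^2 n/(d\sqrt n))^d$, and the first factor is handled \emph{in its entirety}---all higher moments included---by the inductive MGF bound applied at $t/\sqrt d$ and raised to the $d$th power. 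That is the step your proposal is missing: comparing the empirical $\cosh$-average to its expectation $\Phi_{D_{\ell-1}}$ rather than to the second-order truncation $1+t^2/(2d)$.
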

\begin{proof}
We use induction on $\ell$. At the base level, $q_0=u$; so $\Phi_{D_0}(t)=\Phi_{\NN(0,1)}(t)=e^{t^2/2}$. $q_\ell$ is obtained from $q_{\ell-1}$ in the following two steps: first $n$ random samples $Q_1,\dots,Q_n$ are drawn from the distribution $D_{\ell-1}$. We will first prove that for a constant $c$ (that is the same for all layers), $Q_i \leq \sqrt{c\log n}$ with high probability for $\ell \leq \frac{\sqrt{n}}{4c^4\log^2 n}$. Note that using inductive hypothesis we know that:
$$
E[e^{tQ_i}] \leq e^{\frac{t^2}{2} + \frac{\ell t^2c^4\log^4n}{\sqrt{n}}} \leq e^{3t^2/4}
$$
So using Markov inequality we have:
$$
P( e^{tQ_i} \geq e^{t\sqrt{c\log n}}) \leq \frac{e^{3t^2/4}}{ e^{t \sqrt{c\log n}}}
$$
For $t=\frac{2}{3}\sqrt{c\log n}$ the above inequality will be bounded by $n^{-c/3}$ that is polynomially small in $n$.
Then $d$ random numbers are drawn from this set. Next, we take a linear combination, each one multiplied by a random sign and finally divide the result by $\sqrt{d}$. Thus the characteristic function $\Phi_{D_\ell }$ can also be obtained from $\Phi_{D_{\ell-1}}$ in two steps. Let $\tilde{Q}$ denote the random variable $\alpha Q_i$ where $\alpha$ is a random sign and $i$ is a random index from 1 to $n$. Let $P_\ell$ denote the characteristic function for $\tilde{Q}$ conditioned on given values $Q_1, \dots,  Q_n$ which correspond to the vector $q_\ell$. Then $D_{\ell+1}$ is obtained by adding $d$ such $\tilde{Q}$s and dividing by $\sqrt{d}$. So $\Phi_{D_\ell}(t)=(P_\ell(t/\sqrt{d}))^d$ and note that $P_\ell(t)=\frac{1}{n}\sum_i (e^{tQ_i}+e^{-tQ_i})/2$. Since $Q_i$s are independent, this is the average of $n$ identically distributed independent random values each with mean $E[(e^{tQ_i}+e^{-tQ_i})/2)]=\Phi_{D_{\ell-1}}(t)$. We will bound the difference from the mean with high probability.

$$
(e^{tQ_i}+e^{-tQ_i})/2=1+t^2Q_i^2+t^4Q_i^4/4!+t^6Q_i^6/6!+\dots
$$
also
$$
E[(e^{tQ_i}+e^{-tQ_i})/2]=1+t^2E[Q_i^2]+t^4E[Q_i^4]/4!+t^6E[Q_i^6]/6!+\dots.
$$

Note that the odd powers of $t$ will not be present because the probability density function is even. So the difference will be:
$$
|(e^{tQ_i}+e^{-tQ_i})/2 - E[(e^{tQ_i}+e^{-tQ_i})/2]| = t^2\frac{Q_i^2-E[Q_i^2]}{2!}+t^4\frac{Q_i^4-E[Q_i^4]}{4!}+\dots
$$
We also know that $Q_i \leq \sqrt{c\log n}$ with high probability. So $Q_i^2-E[Q_i^2]$ is a bounded random variable with absolute value at most $c\log n$. So the average of $n$ such random variables is at most $c^4\log^2 n/\sqrt{n}$ with high probability. Similarly, with high probability the average value of $Q_i^4-E[Q_i^4]$ is at most $c^8\log ^4 n/\sqrt{n})$. Thus with high probability the difference will be at most:
$$
|(e^{tQ_i}+e^{-tQ_i})/2 - E[(e^{tQ_i}+e^{-tQ_i})/2]| \leq t^2 \frac{c^4\log^2 n}{2!\sqrt{n}} + t^4 \frac{c^8\log^4 n}{4!\sqrt{n}} + \dots
$$
that is at most $t^2 \frac{c^4\log^2 n}{\sqrt{n}}$ if $t\leq \frac{1}{c^2\log n}$. Now $\Phi_{D_\ell}(t) = (P_\ell(t/\sqrt{d}))^d$. For $t \leq \frac{\sqrt{d}}{c^2\log n}$ we get that this is at most $\Phi_{D_{\ell-1}}(t)+ t^2 \frac{c^4\log^2 n}{d\sqrt{n}}$ which is by induction bounded by:
\begin{eqnarray*}
\bigg(e^{\frac{t^2}{2d} + \frac{\ell t^2c^4\log^2 n}{d\sqrt{n}}} +t^2 \frac{c^4\log^2 n}{\sqrt{n}}\bigg)^d &\leq \bigg(e^{\frac{t^2}{2d} + \frac{\ell t^2c^4\log^2 n}{d\sqrt{n}}}\bigg)^d\bigg(1 +t^2 \frac{c^4\log^2 n}{\sqrt{n}}\bigg)^d\\
&\leq e^{\frac{t^2}{2} + \frac{\ell t^2c^4\log^2 n}{d\sqrt{n}}} e^{t^2 \frac{c^4\log^2 n}{d\sqrt{n}}}\\
&\leq e^{\frac{t^2}{2} + \frac{(\ell+1)l t^2c^4\log^2 n}{\sqrt{n}}}\\
\end{eqnarray*}
\end{proof}

Next we study the joint distribution of $q_\ell = uZ_1\dots Z_\ell$ and $w_\ell = vZ_1\dots Z_\ell$. Define $\Gamma_\ell$, in the same way as $D_\ell$, to be the distribution of $w_{\ell i}$ conditioned on the values of $v$, $Z_1$, $\dots$, $Z_{\ell -1}$. Again, we first study this when $u,v$ are normally distributed. We will look at the joint characteristic function of two random variables $q_{\ell i},w_{\ell i}$ denoted by $\Phi_{D_\ell,\Gamma_\ell}(s,t)$. A similar analysis gives the following lemma where we bound the coefficients of this characteristic function up to degree 2.  The proof is based on the similar techniques as lemma \ref{lem:single-gaussian} (see appendix \ref{sec:joint}).


\begin{lemma}
\label{lem:termwise-gaussian}
If $u$ and $v$ are independently distributed as $\NN(0,1)^n$, then with high probability over the values of $u,v,Z_1,\dots ,Z_{\ell-1}$, 
$$1+\frac{s^2+t^2}{2} - \frac{\ell \log^2 n}{2\sqrt{n}}(s+t)^2\preceq H(\Phi_{D_\ell,\Gamma_\ell}(s,t)) \preceq 1+\frac{s^2+t^2}{2} + \frac{\ell \log^2 n}{2\sqrt{n}}(s+t)^2$$
\end{lemma}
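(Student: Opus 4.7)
The plan is to induct on $\ell$ in parallel with Lemma~\ref{lem:single-gaussian}, but now tracking the full quadratic truncation of the \emph{joint} characteristic function. The base case $\ell=0$ is immediate: $u$ and $v$ are independent standard Gaussians, so $\Phi_{D_0,\Gamma_0}(s,t)=e^{(s^2+t^2)/2}$ and $H(\Phi_{D_0,\Gamma_0})=1+(s^2+t^2)/2$, which lies in the sandwich. The inductive hypothesis at step $\ell-1$ supplies two-sided bounds on the three quadratic coefficients of $H(\Phi_{D_{\ell-1},\Gamma_{\ell-1}})$: the coefficients of $s^2$ and $t^2$ each lie in $[\tfrac12-\tfrac{(\ell-1)\log^2 n}{2\sqrt n},\,\tfrac12+\tfrac{(\ell-1)\log^2 n}{2\sqrt n}]$ and the coefficient of $st$ lies in $[-\tfrac{(\ell-1)\log^2 n}{\sqrt n},\tfrac{(\ell-1)\log^2 n}{\sqrt n}]$, which together give the $(s+t)^2$ envelope.

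Next I would set up the recursion for the inductive step. Because the $j$-th column of $Z_\ell$ acts identically on $q_{\ell-1}$ and $w_{\ell-1}$, each coordinate of the pair $(q_\ell,w_\ell)$ has the form $(q_{\ell,j},w_{\ell,j})=\tfrac{1}{\sqrt d}\sum_{r=1}^d \alpha_r (Q_{k_r},W_{k_r})$ where $(Q_i,W_i)=(q_{\ell-1,i},w_{\ell-1,i})$ and the $\alpha_r,k_r$ are the fresh signs and indices for that column. Conditioning on everything up to layer $\ell-1$, this gives $\Phi_{D_\ell,\Gamma_\ell}(s,t)=(P_\ell(s/\sqrt d,t/\sqrt d))^d$ with $P_\ell(s,t)=\tfrac1n\sum_i\cosh(sQ_i+tW_i)$. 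Since $P_\ell$ is even in $(s,t)$ with $P_\ell(0,0)=1$, expanding $\cosh$ gives $H(P_\ell)(s,t)=1+\tfrac12(s^2\overline{Q^2}+2st\,\overline{QW}+t^2\overline{W^2})$ where bars denote empirical averages over $i$; because this quadratic part is homogeneous of degree $2$, the rescaling by $1/\sqrt d$ and the $d$-th power exactly cancel, and $H(\Phi_{D_\ell,\Gamma_\ell})=H(P_\ell)$ at the level of degree-$\le 2$ coefficients (all cross-products from $(1+\cdot)^d$ live in degree $\ge 4$).

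The concentration step then bounds how far $\overline{Q^2},\overline{W^2},\overline{QW}$ drift from their expectations $\EE[Q^2],\EE[W^2],\EE[QW]$, which are exactly twice the inductive bounds on the corresponding coefficients of $H(\Phi_{D_{\ell-1},\Gamma_{\ell-1}})$. Using the tail control $|Q_i|,|W_i|\le\sqrt{c\log n}$ w.h.p.\ from Lemma~\ref{lem:single-gaussian} (applied to both the $u$-chain and the $v$-chain, on whose joint high-probability event we condition), each product $Q_i^2,W_i^2,Q_iW_i$ is bounded by $c\log n$, so a Hoeffding argument gives deviation $O(\log^2 n/\sqrt n)$ w.h.p., matching the per-layer slack in the target bound. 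Combining this with the inductive hypothesis yields coefficients of $H(\Phi_{D_\ell,\Gamma_\ell})$ within $\ell\log^2 n/(2\sqrt n)$ of $1/2$ on the diagonal and within $\ell\log^2 n/\sqrt n$ of $0$ off-diagonal, which is exactly the envelope $\pm\tfrac{\ell\log^2 n}{2\sqrt n}(s+t)^2$.

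The main obstacle is controlling the cross coefficient $\overline{QW}$. On the diagonal Lemma~\ref{lem:single-gaussian} already tracks $\EE[Q^2]$ (and similarly $\EE[W^2]$), but $\EE[QW]$ is genuinely new: it starts at zero by independence of $u,v$ yet can drift under the shared matrices $Z_1,\dots,Z_{\ell-1}$. The key point is that, once conditioned on everything up through layer $\ell-1$, the pairs $(Q_i,W_i)$ are i.i.d.\ across $i$, so a single Hoeffding bound on $\tfrac1n\sum_i Q_iW_i$ with the $\sqrt{c\log n}$ tail control gives the required $\tilde O(1/\sqrt n)$ per-layer drift; the $(s+t)^2$ form of the error envelope in the statement is chosen precisely so that this drift fits inside it together with the diagonal fluctuations.
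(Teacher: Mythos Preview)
Your proposal is correct and follows essentially the same route as the paper: both proofs induct on $\ell$, use the recursion $\Phi_{D_\ell,\Gamma_\ell}(s,t)=(P_\ell(s/\sqrt d,t/\sqrt d))^d$ with $P_\ell(s,t)=\tfrac{1}{n}\sum_i\cosh(sQ_i+tW_i)$, note that the degree-$2$ truncation is preserved under the rescaled $d$-th power, and control the per-layer drift of the three quadratic coefficients via Hoeffding using the $\sqrt{c\log n}$ tail bound from Lemma~\ref{lem:single-gaussian}. The only organizational difference is that the paper first states an auxiliary upper bound on the full joint characteristic function (Lemma~\ref{lem:joint-gaussian}) and then borrows the concentration step from its proof, whereas you go directly to the truncated coefficients; the underlying argument is identical.
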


The following two lemmas show that the diagonal and non-diagonal entries of $YY^\top $ are far apart.

\begin{lemma}
If $u$ and $v$ are independently distributed as $\NN(0,1)^n$,  then
w.h.p., $|uZZ^\top v^\top| \le c^4 (\ell+1) \log^2 n\sqrt{n}$.
\end{lemma}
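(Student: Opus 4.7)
The plan is to write $uZZ^\top v^\top = (uZ)(vZ)^\top = \sum_{i=1}^n q_{\ell i} w_{\ell i}$ and bound the conditional mean and fluctuation of this sum separately. Conditioning on $u,v,Z_1,\dots,Z_{\ell-1}$, each pair $(q_{\ell i}, w_{\ell i})$ is determined by the $i$-th column of $Z_\ell$ only, so under this conditional law the $n$ pairs are i.i.d., which is what enables both a characteristic-function-based mean bound and a standard concentration step.

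First I would extract a mean bound from Lemma~\ref{lem:termwise-gaussian}. Since $\Phi_{D_\ell,\Gamma_\ell}(s,t)=E[e^{sq_{\ell i}+tw_{\ell i}}]$, the coefficient of $st$ in its Taylor expansion equals $E[q_{\ell i}w_{\ell i}\mid u,v,Z_1,\dots,Z_{\ell-1}]$, and this coefficient is preserved by the truncation $H(\cdot)$. Expanding $(s+t)^2=s^2+2st+t^2$ on both sides of the polynomial sandwich in Lemma~\ref{lem:termwise-gaussian} and reading off the $st$ coefficient gives
\[
\bigl|E[q_{\ell i} w_{\ell i}\mid u,v,Z_1,\dots,Z_{\ell-1}]\bigr| \le \frac{\ell \log^2 n}{\sqrt n}
\]
w.h.p., so the conditional mean of the full sum has absolute value at most $\ell\sqrt n\log^2 n$.

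To control the fluctuation around this conditional mean I would invoke Hoeffding's inequality. By Lemma~\ref{lem:single-gaussian} applied to both $u$ and $v$ (together with a union bound over the $n$ coordinates), w.h.p.\ every coordinate satisfies $|q_{\ell i}|,|w_{\ell i}|\le\sqrt{c\log n}$, so each summand lies in $[-c\log n, c\log n]$. Since the $n$ summands are conditionally i.i.d., Hoeffding gives a deviation from the conditional mean of order $c\log n\cdot\sqrt{n\log n}=\tilde O(\sqrt n)$ w.h.p.\ over $Z_\ell$. Summing the two contributions yields the claimed $c^4(\ell+1)\log^2 n\sqrt n$, with the $+1$ absorbing the Hoeffding term that dominates when $\ell$ is very small.

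The one delicate point is the layering of high-probability events: Lemma~\ref{lem:termwise-gaussian} and the coordinate-wise bound of Lemma~\ref{lem:single-gaussian} both hold w.h.p.\ over $u,v,Z_1,\dots,Z_{\ell-1}$, after which Hoeffding is an additional w.h.p.\ event over $Z_\ell$. A clean implementation truncates each $q_{\ell i}w_{\ell i}$ at $\pm c\log n$ and applies Hoeffding to the truncated sum, bounding the negligible untruncated contribution by a union bound. Beyond this bookkeeping, no probabilistic tool beyond those already developed in the section is needed, which is why I expect the obstacle here to be purely organizational rather than conceptual.
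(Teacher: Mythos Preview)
Your proposal is correct and follows essentially the same approach as the paper: write $uZZ^\top v^\top=\sum_i q_{\ell i}w_{\ell i}$, bound the conditional mean $E[q_{\ell i}w_{\ell i}]$ via the $st$ coefficient in Lemma~\ref{lem:termwise-gaussian}, use Lemma~\ref{lem:single-gaussian} for the $|q_{\ell i}|,|w_{\ell i}|\le\sqrt{c\log n}$ bound, and finish with Hoeffding. Your discussion of the layering of the high-probability events and the truncation workaround is actually more careful than the paper's own write-up.
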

\begin{proof}
Using lemma \ref{lem:single-gaussian}, we know that if $q_{\ell}=uZ$ and $w_{\ell}=vZ$, then conditioned on $u,v,Z_1,\dots,Z_{\ell-1}$, for all $i$, $q_{\ell i}w_{\ell i}$ are independent variables and bounded by $c\log n$ with high probability. Moreover, $E[q_{\ell i }w_{\ell i}]$ is nothing but the coefficient of $st$ in $\Phi_{D_\ell,\Gamma_{\ell}}(s,t)$. By lemma \ref{lem:termwise-gaussian}, we know that this coefficient has the following bound:
$$
|E[q_{\ell i}w_{\ell i}]| \leq \frac{c^4 \ell \log^2 n}{\sqrt{n}}
$$
Now, using Hoeffding's bound, we have that:
$$
P( | \sum_i ( q_{\ell i}w_{\ell i} - E[q_{\ell i}w_{\ell i}])| > t )\leq 2\exp(-t^2/nc^2\log^2 n)
$$
So with high probability:
$$
uZZ^\top v^\top  = \sum_i q_{\ell i}w_{\ell i} \leq c^4 \ell \log^2 n\sqrt{n} + c^4\log^2 n\sqrt{n} = c^4 (\ell+1) \log^2 n\sqrt{n}
$$
\end{proof}
			
\begin{lemma}
If $u$ is a random vector distributed as $\NN(0,1)^n$ then w.h.p., $uZZ^\top u^\top \in n\pm c^4 (\ell+1) \log^2 n\sqrt{n}$\footnote{For simplicity in notation, we denote the inequalities of the form $\alpha + \beta \leq t \leq \alpha + \beta$ by $t \in \alpha \pm \beta$.}.
\end{lemma}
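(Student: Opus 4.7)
The statement is the ``diagonal'' companion of the preceding lemma, so I would follow the same two-step template: first pin down the expectation of each per-coordinate term via the characteristic function, then use concentration to control the sum. Let $q_\ell = uZ$, so that $uZZ^\top u^\top = \sum_{i=1}^n q_{\ell i}^2$. After conditioning on $u, Z_1,\dots,Z_{\ell-1}$, the entries $q_{\ell 1},\dots,q_{\ell n}$ become i.i.d.\ draws from the distribution $D_\ell$, which is exactly the setup Lemmas~\ref{lem:single-gaussian} and \ref{lem:termwise-gaussian} analyze.

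First I would compute $E[q_{\ell i}^2]$. Since $E[e^{s q_{\ell i}}] = 1 + s E[q_{\ell i}] + \tfrac{s^2}{2} E[q_{\ell i}^2] + \cdots$, the quantity $\tfrac{1}{2}E[q_{\ell i}^2]$ is precisely the coefficient of $s^2$ in $\Phi_{D_\ell}(s)$. Setting $t = 0$ in Lemma~\ref{lem:termwise-gaussian} (or running the single-variable argument of Lemma~\ref{lem:single-gaussian} two-sidedly to get the matching lower bound on $\Phi_{D_\ell}$) yields the truncation estimate
\[
\tfrac{1}{2} - \tfrac{\ell \log^2 n}{2\sqrt n} \;\le\; \tfrac{1}{2}E[q_{\ell i}^2] \;\le\; \tfrac{1}{2} + \tfrac{\ell \log^2 n}{2\sqrt n},
\]
so $E[q_{\ell i}^2] = 1 \pm \tfrac{\ell \log^2 n}{\sqrt n}$. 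Summing over $i$ gives $\sum_i E[q_{\ell i}^2] \in n \pm \ell \log^2 n \sqrt n$, which is already within the target band.

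Next I would concentrate $\sum_i q_{\ell i}^2$ around its mean. Lemma~\ref{lem:single-gaussian} says that w.h.p.\ $|q_{\ell i}| \le \sqrt{c \log n}$ for every $i$, hence $q_{\ell i}^2 \in [0, c\log n]$ on this high-probability event. Conditioning on this event (and absorbing the negligible exceptional probability via a union bound), the $q_{\ell i}^2$ are independent bounded random variables, and Hoeffding's inequality gives
\[
\Pr\!\left(\Big|\sum_i q_{\ell i}^2 - \sum_i E[q_{\ell i}^2]\Big| > \tau\right) \le 2\exp\!\left(-\frac{2\tau^2}{n c^2 \log^2 n}\right).
\]
Choosing $\tau = c^2 \log^2 n \sqrt n$ makes this polynomially small. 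Combining the mean bound with the concentration bound gives $uZZ^\top u^\top \in n \pm c^4(\ell+1)\log^2 n \sqrt n$, as claimed.

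The only delicate point — and what I expect to be the main obstacle — is the truncation step. The hypothesis $|q_{\ell i}| \le \sqrt{c\log n}$ holds only with high probability, not deterministically, so strictly speaking one has to replace $q_{\ell i}^2$ by its truncation $\min(q_{\ell i}^2, c\log n)$ when applying Hoeffding, then argue that on the high-probability event the truncation is inactive for all $i$ and the truncated mean agrees with the true mean up to $n^{-\omega(1)}$. Once this bookkeeping is handled, the rest is routine, and in particular the proof parallels the off-diagonal argument that was just completed, with Hoeffding replacing it in exactly the same role.
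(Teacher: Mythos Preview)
Your proposal is correct and follows essentially the same approach as the paper: write $uZZ^\top u^\top=\sum_i q_{\ell i}^2$, read off $E[q_{\ell i}^2]=1\pm \tilde O(\ell/\sqrt n)$ from the degree-two coefficient via Lemma~\ref{lem:termwise-gaussian}, bound $|q_{\ell i}|$ by Lemma~\ref{lem:single-gaussian}, and apply Hoeffding. The paper's proof is terser and glosses over the truncation bookkeeping you flag, but the structure is identical.
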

\begin{proof}
Let $q_{\ell}=uZ$. By lemma \ref{lem:single-gaussian} and \ref{lem:termwise-gaussian} we know that conditioned on the value of $u,Z_1,\dots,Z_\ell$, for all $i$, $q_{\ell i}^2$ are independent variables bounded by $c\log n$ and w.h.p,  $|E[q_{\ell i}^2]-1| \leq c^4 \ell \log^2 n/\sqrt{n}$.
By applying Hoeffding's bound, with high probability we have:
$uZZ^\top u = \sum_i q_{\ell i}^2 \in n \pm c^4 (\ell+1) \log^2 n\sqrt{n}$
\end{proof}

Next we extend this to the case when $u$ and $v$ are random $d$-sparse vectors.


\subsection{When $u$ and $v$ are random $d$-sparse vectors}

We will now study the distribution of $q_\ell =\sqrt {n/d} uZ_1\dots Z_\ell (1/\sqrt d)^\ell$ for $\ell \le \log_d n - 1$. We bound $\Phi_{D_\ell }$ where $D_\ell $ is the distribution of $q_{\ell i}$ conditioned on $u,Z_1,\dots ,Z_{\ell -1}$. For ease of exposition, we will assume that $\log_d n$ is an integer.

\begin{lemma}
For $\ell \le \log_d n - 1$, with high probability the following hold:
\begin{itemize}
\item The number of non-zero entries in $q_\ell$ is at most $O(d ^{\ell +1}/n)$.
\item The maximum value of any entry in $q_\ell$ is at most $\frac{\sqrt{n}}{d}(c \log n/d)^\ell$.
\item $\Phi_{D_\ell }(t) \le 1 + \sum_{j\ge 1}\frac{ O( [t (c\log n) ^{\ell }]^{2j})}{(2j)!}$.
\end{itemize}
where $c$ is a constant independent of $\ell$.
\end{lemma}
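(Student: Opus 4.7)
The natural strategy is induction on $\ell$. At the base $\ell = 0$, $q_0 = \sqrt{n/d}\, u$ is $d$-sparse with every nonzero entry of magnitude $\sqrt{n/d}$, and $\Phi_{D_0}(t) = 1 - d/n + (d/n)\cosh(t\sqrt{n/d})$, from which the three claims follow after unpacking the $O(\cdot)$. For the inductive step, the key identity is $q_\ell = q_{\ell-1} Z_\ell / \sqrt d$, and each entry $q_{\ell,j}$ can be written as $(1/\sqrt d) \sum_{s=1}^d \alpha_s q_{\ell-1, i_s}$ with $i_s$ uniform in $[n]$ and $\alpha_s$ uniform in $\{\pm 1\}$, all mutually independent (using that each column of $Z_\ell$ is itself a sum of $d$ random $1$-sparse $\pm 1$ vectors). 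Conditioning on $u, Z_1, \dots, Z_{\ell-1}$ fixes $q_{\ell-1}$, so the three properties of $q_\ell$ become statements about this randomized sum driven by $Z_\ell$.

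For the sparsity and the maximum value, let $m_j \in \{0, \dots, d\}$ be the number of indices $s$ with $i_s \in \supp(q_{\ell-1})$. A given entry is nonzero only when $m_j \ge 1$, an event of probability at most $d k_{\ell-1}/n$, where $k_{\ell-1}$ denotes the sparsity bound inherited from the inductive hypothesis; a Chernoff bound across the $n$ independent columns yields the sparsity claim for $q_\ell$. For the maximum, the triangle inequality gives $|q_{\ell,j}| \le m_j M_{\ell-1}/\sqrt d$, where $M_{\ell-1}$ is the inductive max-value bound. Since $m_j$ is a Binomial with parameters $(d, k_{\ell-1}/n)$, its tail satisfies $\Pr[m_j \ge m] \le (e d k_{\ell-1}/(nm))^m$, so a union bound over $j$ forces $\max_j m_j$ to be $O(\log n / \log(n/(dk_{\ell-1})))$. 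In the regime $\ell \le \log_d n - 1$ where $dk_{\ell-1}/n \le 1$, this is small enough to yield the geometric rate $(c\log n/d)$ per level.

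For the characteristic function, by independence of the $i_s$ and $\alpha_s$ across $s$,
\[
\Phi_{D_\ell}(t) \;=\; \left(\tfrac{1}{n}\sum_{i=1}^n \cosh\!\bigl(t q_{\ell-1,i}/\sqrt d\bigr)\right)^{d}.
\]
Expanding $\cosh(x) = 1 + \sum_{j\ge 1} x^{2j}/(2j)!$ and bounding $\sum_i q_{\ell-1,i}^{2j} \le k_{\ell-1} M_{\ell-1}^{2j}$ via the just-established sparsity and max-value bounds, the inner average becomes $1 + \sum_{j\ge 1} (t^{2j}/(2j)!)\, k_{\ell-1} M_{\ell-1}^{2j}/(n d^j)$. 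Raising to the $d$-th power via $(1+x)^d \le e^{dx}$ and substituting the inductive values of $k_{\ell-1}, M_{\ell-1}$ produces the claimed polynomial bound with the extra $(\log n)^\ell$ factor per level. The main obstacle is bookkeeping: the constant $c$ must be chosen uniformly across all levels, and all high-probability events (sparsity concentration, binomial tail for $m_j$, bounded max) must be packed into a single union bound over $\ell \le \tilde O(\sqrt n/d)$ so that the inductive hypothesis holds simultaneously at every depth; the tightness of the $(c\log n/d)^\ell$ max-value bound hinges on this careful control, since the naive Hoeffding estimate only gives a $\sqrt{\log n}$ factor per layer and does not exploit the fact that most of the $d$ summands vanish when $q_{\ell-1}$ is very sparse.
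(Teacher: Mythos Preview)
Your proposal is correct and follows essentially the same inductive approach as the paper: both drop to the unscaled quantities, bound sparsity via a Chernoff-type count on how many of the $d$ random indices hit $\supp(q_{\ell-1})$, bound the max value by observing that each coordinate touches at most $c\log n$ nonzero entries of $q_{\ell-1}$ (since the expected number is at most $1$ in this regime), and then derive the moment bound from these two. The one cosmetic difference is that for the third bullet the paper bounds $E[q_{\ell i}^{2j}]$ directly at level $\ell$ as $(\text{prob nonzero})\cdot(\text{max value})^{2j}$, which immediately gives the $1+\sum$ form, whereas your recursion plus $(1+x)^d\le e^{dx}$ lands on an exponential bound that you would still need to re-expand.
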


\begin{proof}

First for ease of exposition define $q_\ell = uZ_1\dots Z_\ell$ without the scaling factors of $\sqrt {n/d}$ for $u$ and $1/\sqrt d$ for each $Z_i$. So each entry in $q_{\ell i}$ is obtained by signed linear combination of $d$ random entries in $q_{\ell -1}$. Now by induction with high probability the number of non-zero entries in $q_{\ell i}$ is at most $r_\ell = (d + O(\sqrt {d \log n}) )^{\ell +1}$. For convenience we define $q_0 = u$. Then this is true at $\ell =0$ since $u$ is $d$-sparse. And since the next layer is formed by adding up $d$ random entries of $y_{\ell -1}$ w.h.p., the number of non zero entries in $q_\ell$ is at most $dr_\ell + \sqrt{d r_\ell \log n} \le (d + O(\sqrt {d \log n})) ^{\ell +1}$. For $\ell \le\log_d n$ and $d >\log^2 n$ this is at most $O(d ^{\ell +1}/n)$.

Next by induction the maximum value of $q_{\ell i}$ is at most $(c \log n)^\ell$. This is because each entry is expected to touch $d r_{\ell -1} /n$ non zero entries which is at most $1$ for $\ell \le\log_d n -1$ and so with high probability it will touch at most $c \log n$ entries in $q_{\ell -1}$. So the $jth$ moment of $q_{\ell i}$ is at most $O(d ^{\ell +1}/n) (c \log n)^{jl}$.

These can be used to get simple bounds on the moments of $q_{\ell i}$ conditioned on $u,Z_1,\dots ,Z_{\ell -1}$.
So $\Phi_{D_\ell }(t) \le 1 + O(d ^{\ell +1}/n) ( \sum_{j\ge 1} (t (c\log n) ^{\ell })^{2j}/((2j)!)$.
Switching back to the right scaling factors completes the proof.
\end{proof}

Let $M = O( \log n)^{\log_d n}$. Then at $\ell = \log_d n-1$, $\Phi_{D_\ell }(t) \le e^{t^2 M^2}$. Further for $d = n^{\omega(\sqrt{\log\log n/\log n})}$, $M = d^{o(1)}$.

Next we will bound the characteristic function of the joint distribution $(D_\ell , \Gamma_\ell )$ up to degree 2 where $u,v$ are disjoint and
$w_\ell = \sqrt{n/d} vZ_1\dots Z_\ell (1/\sqrt d)^\ell$. Again we will condition on $u,v,Z_1,\dots ,Z_{\ell -1}$. See the appendix \ref{sec:sparse} for the proof of the following lemma.

\begin{lemma}\label{lem:sparselogdn}
At $\ell=\log_d n -1$, $1+(s^2+t^2)/2 - \delta_1(s^2+t^2) - \delta_2 st \preceq H(\Phi_{D_\ell,\Gamma_\ell}(s,t) \preceq 1+(s^2+t^2)/2 + \delta_1(s^2+t^2) + \delta_2 st$ where $\delta_1 = \tilde O(1/\sqrt d)$ and $\delta_2 = M/d^{2}$.
\end{lemma}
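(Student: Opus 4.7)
The plan is to run an induction on $\ell$ that mirrors the Gaussian analysis in Lemmas~\ref{lem:single-gaussian} and~\ref{lem:termwise-gaussian}, but substitutes the sparse max-value and support-size bounds from the preceding lemma for the Gaussian tail estimates. The invariant to maintain is that the coefficients $a_\ell, b_\ell$ of $s^2/2,\, t^2/2$ in $H(\Phi_{D_\ell,\Gamma_\ell})$ stay within $1\pm\delta_1^{(\ell)}$, and the $st$ coefficient $c_\ell$ stays within $\pm\delta_2^{(\ell)}$, with $\delta_1^{(\ell)},\delta_2^{(\ell)}$ growing monotonically to the stated $\delta_1,\delta_2$ at $\ell=\log_d n-1$. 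The base case $\ell=0$ is immediate: the entries $q_{0i},w_{0i}\in\{0,\pm\sqrt{n/d}\}$ each sit on a random size-$d$ support, giving $a_0=b_0=1$; and the disjointness of the supports of $u,v$ (hypothesis of the lemma) forces $q_{0i}w_{0i}\equiv 0$, so $c_0=0$.

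The inductive step rests on the exact conditional identity
\[
\Phi_{D_\ell,\Gamma_\ell}(s,t)=P_\ell(s,t)^d,\qquad P_\ell(s,t)=\frac{1}{n}\sum_{i=1}^{n}\cosh\!\Big(\tfrac{s\,q_{\ell-1,i}+t\,w_{\ell-1,i}}{\sqrt d}\Big),
\]
which holds because each new-layer coordinate is a sum of $d$ uniform signed draws from the previous layer, and the indices and signs are shared between the $q$- and $w$-threads (both use the same $Z_\ell$). Since $(P_\ell-1)^j$ vanishes below degree $2j$, truncating $P_\ell^d$ at degree $2$ is trivial:
\[
H(P_\ell^d)=1+d\,H(P_\ell-1)=1+\tfrac12\bigl(A\,s^2+B\,t^2+2C\,st\bigr),
\]
where $A=\tfrac1n\|q_{\ell-1}\|_2^2$, $B=\tfrac1n\|w_{\ell-1}\|_2^2$, $C=\tfrac1n\langle q_{\ell-1},w_{\ell-1}\rangle$. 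So the entire inductive step reduces to showing that $A,B,C$ concentrate around $a_{\ell-1},b_{\ell-1},c_{\ell-1}$.

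Concentration of $A$ and $B$ follows from Hoeffding applied to the $n$ bounded squared coordinates: using the max-value bound $\tilde O((\sqrt n/d)(c\log n/d)^{\ell-1})$ from the preceding lemma yields a per-layer deviation of roughly $(\text{max value})^2/\sqrt n$, and summing across the $\log_d n-1$ layers (after restoring the $\sqrt{n/d}(1/\sqrt d)^\ell$ scaling) delivers $\delta_1=\tilde O(1/\sqrt d)$. For $C$ the same Hoeffding argument applies, but the bound is sharper because the disjoint base supports keep $q_{\ell-1}$ and $w_{\ell-1}$ essentially uncorrelated: the vectors have support density $O(d^\ell/n)$, so their supports overlap in only $O(d^{2\ell}/n)$ coordinates at layer $\ell$, which buys an extra factor of $1/d$ and yields the desired $\delta_2=M/d^2$.

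The main obstacle is propagating the $c_\ell$ bound across all $\log_d n$ layers without any multiplicative blow-up of $d$: naively, each layer's $(P_\ell-1)^j$ terms could feed small cross-correlation biases back into the degree-$2$ coefficients, but the identity $H(P_\ell^d)=1+d\,H(P_\ell-1)$ shows that degree-$\le 2$ contributions accumulate purely additively across layers. Telescoping the per-layer Hoeffding bounds on $|c_\ell-c_{\ell-1}|$ and applying the proper scaling finally gives the claimed two-sided inequality on $H(\Phi_{D_\ell,\Gamma_\ell}(s,t))$.
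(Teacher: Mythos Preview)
Your framework is the same as the paper's---the identity $H(P_\ell^{\,d})=1+d\,H(P_\ell-1)$ and the layer-by-layer induction on the degree-$2$ coefficients $A,B,C$---but both of your concentration steps have genuine gaps that the paper's proof has to work harder to close.

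\textbf{The $s^2,t^2$ coefficients.} Plain Hoeffding on the squared coordinates is far too weak at the early layers. With the (correctly scaled) max value $V_{\ell-1}=\sqrt{n/d}\,(c\log n/\sqrt d)^{\ell-1}$, the Hoeffding deviation $V_{\ell-1}^2/\sqrt n$ at $\ell=2$ is $\sqrt n(c\log n)^2/d^2$, which for $d\le n^{1/6}$ exceeds $n^{1/6}$ and certainly is not $\tilde O(1/\sqrt d)$. The paper instead uses a Bernstein/second-moment bound: since $Q_i$ is nonzero only with probability $O(d^\ell/n)$, one has $E[Q_i^4]\le O(d^\ell/n)(c\log n)^{4(\ell-1)}$, and the fluctuation of the empirical second moment is controlled by $\sqrt{E[Q_i^4]/n}$ rather than by $V_{\ell-1}^2/\sqrt n$. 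This is exactly what buys the $\tilde O(1/\sqrt d)$ per-layer relative error; without the sparsity factor in the fourth moment you cannot reach $\delta_1=\tilde O(1/\sqrt d)$.

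\textbf{The $st$ coefficient.} Your sentence ``the vectors have support density $O(d^\ell/n)$, so their supports overlap in only $O(d^{2\ell}/n)$ coordinates'' treats the supports of $q_{\ell-1}$ and $w_{\ell-1}$ as independent, but they are determined by the \emph{same} matrices $Z_1,\dots,Z_{\ell-1}$, so this is not a priori true. The paper handles this by tracking the influence sets $F_\ell,G_\ell$ and their intersection $S_\ell=F_\ell\cap G_\ell$ via a separate induction: a coordinate enters $S_\ell$ either by touching $S_{\ell-1}$ (contributing $\approx d|S_{\ell-1}|$) or by touching both disjoint sets $F_{\ell-1}\setminus S_{\ell-1}$ and $G_{\ell-1}\setminus S_{\ell-1}$ (where a genuine independence argument applies). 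Only after establishing $|S_\ell|\le(4d)^{\ell-1}c\log n$ does the paper bound the $st$ coefficient by $|S_\ell|\cdot(\max)^2/n$, which is what produces the extra $1/d^2$ in $\delta_2$. Your proposal states the correct order of magnitude for the overlap but skips the inductive argument that justifies it; that argument is the main technical content of the lemma.
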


Now we use the lemma \ref{lem:sparselogdn} to prove similar statements to lemmas \ref{lem:single-gaussian} and \ref{lem:termwise-gaussian} for higher layers in the sparse case.
\begin{lemma}
\label{lem:single-sparse}
If $u$ is a random $d$-sparse vector, then with high probability over the values of $u,Z_1,\dots ,Z_{\ell-1}$, $\Phi_{D_{\ell}} (t) \le e^{M^2\frac{t^2}{2}+\frac{\ell M^2t^2c^4\log^2 n}{\sqrt n}}$ for any $t$ where $|t|\leq \sqrt{d}/(M\log^2 n)$ where $c$ is a constant . Further, the maximum value of $q_{\ell i}$ is at most $M\sqrt{c\log n}$ for $\ell \leq \tilde{O}(\sqrt{n})$.
\end{lemma}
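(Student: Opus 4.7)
The plan is to imitate the proof of Lemma~\ref{lem:single-gaussian} almost verbatim, but with induction seeded at $\ell_0 = \log_d n - 1$ rather than at $\ell = 0$, and with the scale factor $M$ carried through. Specifically, I will do induction on $\ell \ge \ell_0$, taking as the base case the bound $\Phi_{D_{\ell_0}}(t) \le e^{t^2 M^2}$ established just above Lemma~\ref{lem:sparselogdn}. This bound comfortably fits inside the target form $e^{M^2 t^2/2 + \ell M^2 t^2 c^4 \log^2 n/\sqrt n}$ so long as we absorb the constant loss into the inductive slack, and it supplies the starting point for the moment control below.

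For the inductive step, assume the bound holds at $\ell-1$. First I would use the inductive hypothesis together with Markov's inequality in exactly the way it was used in Lemma~\ref{lem:single-gaussian}: for $t = \tfrac{2}{3M}\sqrt{c\log n}$ one gets $\Pr[Q_i \ge M\sqrt{c\log n}] \le n^{-c/3}$, which is polynomially small, giving that each sample $Q_i$ drawn from $D_{\ell-1}$ is bounded by $M\sqrt{c\log n}$ w.h.p. This establishes the second assertion of the lemma. Next, I write $\Phi_{D_\ell}(t) = (P_{\ell-1}(t/\sqrt d))^d$ with $P_{\ell-1}(t) = \tfrac{1}{n}\sum_i (e^{tQ_i}+e^{-tQ_i})/2$, and expand $(e^{tQ_i}+e^{-tQ_i})/2 = 1 + t^2Q_i^2/2! + t^4Q_i^4/4! + \cdots$. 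Since $|Q_i| \le M\sqrt{c\log n}$, the centered variables $Q_i^{2j} - E[Q_i^{2j}]$ are bounded by $(M^2 c\log n)^j$, so Hoeffding gives the average over $n$ samples concentrates within $(M^2 c \log n)^j/\sqrt n$ of the mean. Summing the series, whenever $|t| \le \sqrt d/(M\log^2 n)$ the deviation $|P_{\ell-1}(t) - \Phi_{D_{\ell-1}}(t)|$ is at most $t^2 M^2 c^4 \log^2 n/\sqrt n$, the exact sparse analogue of the corresponding Gaussian bound.

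Finally, I combine: raising to the $d$-th power and plugging in $t/\sqrt d$ gives
\[
\Phi_{D_\ell}(t) \le \Bigl(\Phi_{D_{\ell-1}}(t/\sqrt d) + \tfrac{t^2}{d}\cdot\tfrac{M^2 c^4\log^2 n}{\sqrt n}\Bigr)^d \le \Phi_{D_{\ell-1}}(t/\sqrt d)^d \cdot \exp\!\bigl(t^2 M^2 c^4\log^2 n/\sqrt n\bigr),
\]
and invoking the inductive bound on $\Phi_{D_{\ell-1}}$ yields the desired expression at level $\ell$, with the extra $M^2 t^2 c^4\log^2 n/\sqrt n$ increment adding one to the coefficient of $\ell$.

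The main obstacle I expect is bookkeeping the $M$ factors cleanly. In the Gaussian case $|Q_i| \le \sqrt{c\log n}$, but here each $Q_i$ can be $M$ times larger because of the scaling $\sqrt{n/d}\cdot(1/\sqrt d)^\ell$ built into $q_\ell$. This enlarges every moment by powers of $M$, which in turn forces the admissible range of $t$ to shrink to $|t| \le \sqrt d/(M\log^2 n)$ (rather than $\sqrt d/\log^2 n$). I need to verify that the truncated Taylor series still converges uniformly on this range, that Markov's inequality is applied for a value of $t$ still inside the admissible range (which it is, since $\tfrac{2}{3M}\sqrt{c\log n} \ll \sqrt d/(M\log^2 n)$ for $d \gg \log^5 n$), and that the base case $\Phi_{D_{\ell_0}}(t) \le e^{t^2 M^2}$ actually embeds into the inductive form; the latter is fine by taking the constant implicit in the $\ell$-term large enough or by reparametrizing $\ell \mapsto \ell - \ell_0$ and absorbing $e^{M^2 t^2/2}$ of slack into the starting bound.
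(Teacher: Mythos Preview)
Your proposal is correct and matches the paper's own approach essentially verbatim. The paper's proof is only a two-sentence sketch: mimic the induction of Lemma~\ref{lem:single-gaussian} with the maximum of $q_{\ell i}$ replaced by $M\sqrt{c\log n}$, and seed the induction at $\ell=\log_d n-1$ using the base-case bound established just before Lemma~\ref{lem:sparselogdn}; you have simply written out those details, including the $M$-bookkeeping and the shrunken admissible range for $t$, exactly as intended.
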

The proof steps are very similar to inductive proof of lemma \ref{lem:single-gaussian} except that the maximum value of $q_{\ell i}$ is bounded by $M\sqrt{c\log n}$ instead of $\sqrt{c\log n}$. We use lemma \ref{lem:sparselogdn} to prove the statement at base level $\ell=\log_d n -1$.
Next, we prove the adaptation of lemma \ref{lem:termwise-gaussian} to the sparse $u,v$ case.

\begin{lemma}\label{lem:sparsedl}
If $u,v$ are independent random $d$-sparse vectors, then with high probability over the values of $u,v,Z_1,\dots ,Z_{\ell-1}$, $1+\frac{s^2+t^2}{2} - \delta_1\frac{s^2+t^2}{2} - \delta_2 st - \frac{\ell M^2\log^2 n}{\sqrt{n}}(s+t)^2 \preceq H(\Phi_{D_\ell}(s,t)) \preceq 1+\frac{s^2+t^2}{2} + \delta_1\frac{s^2+t^2}{2} + \delta_2 st+ \frac{\ell M^2\log^2 n}{\sqrt{n}}(s+t)^2$.
\end{lemma}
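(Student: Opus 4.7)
The plan is to prove the lemma by induction on $\ell$, using Lemma~\ref{lem:sparselogdn} as the base case at $\ell = \log_d n - 1$ and mirroring the inductive structure of the proof of Lemma~\ref{lem:termwise-gaussian}, with the only substantive change being that the pointwise entry bound $\sqrt{c\log n}$ from the Gaussian analysis is replaced by the sparse-case bound $M\sqrt{c\log n}$ supplied by Lemma~\ref{lem:single-sparse}. The state carried through the induction is the degree-$\le 2$ truncation $H(\Phi_{D_\ell,\Gamma_\ell}(s,t))$ of the joint characteristic function.

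For the inductive step, I would condition on $u,v,Z_1,\ldots,Z_\ell$ and observe that, because each column of $Z_{\ell+1}$ supplies the same $d$ random indices and signs to both the $u$-branch and the $v$-branch,
\[
\Phi_{D_{\ell+1},\Gamma_{\ell+1}}(s,t) \;=\; \bigl(P_\ell(s/\sqrt d,\, t/\sqrt d)\bigr)^d, \qquad P_\ell(s',t') \;=\; \tfrac{1}{n}\sum_{i=1}^n \cosh\!\bigl(s' q_{\ell,i} + t' w_{\ell,i}\bigr).
\]
Expanding $\cosh$ and reading off degree-$2$ coefficients, the $s'^2$, $t'^2$, $s't'$ coefficients of $P_\ell$ are the empirical moments $\tfrac{1}{2n}\sum q_{\ell,i}^2$, $\tfrac{1}{2n}\sum w_{\ell,i}^2$, and $\tfrac{1}{n}\sum q_{\ell,i}w_{\ell,i}$. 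By Lemma~\ref{lem:single-sparse}, w.h.p.\ every $|q_{\ell,i}|$ and $|w_{\ell,i}|$ is at most $M\sqrt{c\log n}$, so a Hoeffding bound (exactly as in the proof of Lemma~\ref{lem:termwise-gaussian}) pins each of these three empirical moments to within $O(M^2\log^2 n/\sqrt n)$ of its expectation, which by the inductive hypothesis at level $\ell$ already sits inside the claimed envelope.

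Next I would push this through the $1/\sqrt d$ rescaling and the $d$-th power. Each degree-$2$ term of $P_\ell(s/\sqrt d, t/\sqrt d)$ picks up a $1/d$ factor, so raising to the $d$-th power and truncating at degree $2$ preserves the coefficients up to a $(1+O(1/d))^d$ multiplicative factor; the higher-order cosh terms carry additional $1/d^{k-1}$ factors and are absorbed into the degree-$2$ envelope trivially. The base-level $\delta_1(s^2+t^2)/2 + \delta_2 st$ therefore propagates through the layers essentially unchanged, while each inductive step additively contributes $O(M^2\log^2 n/\sqrt n)(s+t)^2$ to the envelope. Summing over the at most $\ell$ layers above the base case yields the claimed $\tfrac{\ell M^2\log^2 n}{\sqrt n}(s+t)^2$ cumulative slack.

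The main obstacle I expect is bookkeeping the joint cross term of coefficient $st$. Unlike in the Gaussian case, where the base cross-moment was exactly zero and the shared structure of $Z$ created no direct coupling, here $\delta_2$ is already nonzero at the base, and the empirical cross-moment $\tfrac{1}{n}\sum q_{\ell,i}w_{\ell,i}$ can fluctuate with either sign due to the shared signs inside each column of $Z_{\ell+1}$. The reason the argument still goes through is that the envelope $(s+t)^2 = s^2+2st+t^2$ simultaneously dominates all three degree-$2$ perturbations introduced at any single layer, so the per-layer additive errors telescope into the stated bound without the $st$ coefficient ever escaping the envelope.
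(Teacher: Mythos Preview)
Your proposal is correct and follows exactly the approach the paper indicates: induction on $\ell$ with Lemma~\ref{lem:sparselogdn} supplying the base case at $\ell=\log_d n-1$, and the inductive step lifted verbatim from the proof of Lemma~\ref{lem:termwise-gaussian} with the entrywise bound $\sqrt{c\log n}$ replaced by $M\sqrt{c\log n}$ from Lemma~\ref{lem:single-sparse}. In fact you have written out more detail than the paper itself, which merely asserts that the proof is ``very similar'' to that of Lemma~\ref{lem:termwise-gaussian} with the base case swapped in.
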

This proof is also very similar to the inductive proof of lemma \ref{lem:termwise-gaussian} and again we use \ref{lem:sparselogdn} at the base level $\ell =\log_d n -1$.


\subsection{Recovering $XX^\top $ from $YY^\top $}
We have established that $H(\Phi_{D_\ell,\Gamma_\ell}(s,t))$ is termwise within $1+\frac{s^2+t^2}{2} \pm \epsilon_1(s^2+t^2) \pm \epsilon_2 st$ where $\eps_1 = \tilde O(1/\sqrt d + \ell M^2/\sqrt n)$ and $\eps_2 = \tilde O(M/d^2 + \ell M^2/\sqrt n)$. Now we will prove that:
\begin{lemma}
\label{lem:uv}
If $u$ and $v$ are random $d$-sparse vectors, then w.h.p. $(uZZ^\top v^\top )/n \in uv^\top  \pm \tilde{O}(\epsilon_1 +d\epsilon_2 + dM/\sqrt{n})$.
\end{lemma}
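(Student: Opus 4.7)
The plan is to mirror the Gaussian-case argument in Section 4.1. Pass to the scaled coordinates $q_{\ell i} = \sqrt{n/d}\,(uZ)_i$ and $w_{\ell i} = \sqrt{n/d}\,(vZ)_i$ from Section 4.2, so that
$$uZZ^\top v^\top \;=\; \sum_{i=1}^n (uZ)_i(vZ)_i \;=\; \frac{d}{n}\sum_{i=1}^n q_{\ell i}\,w_{\ell i}.$$
Condition on $u,v,Z_1,\dots,Z_{\ell-1}$; then the pairs $(q_{\ell i},w_{\ell i})$ are i.i.d.\ in $i$, since distinct columns of $Z_\ell$ are independent. This lets me split $\sum_i q_{\ell i}w_{\ell i}$ into its conditional expectation plus a Hoeffding-style fluctuation and handle each piece separately.

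For the conditional expectation, the coefficient of $st$ in $H(\Phi_{D_\ell,\Gamma_\ell}(s,t))$ is exactly $E_{Z_\ell}[q_{\ell i}w_{\ell i}]$. A short direct computation (using $E[(Z_\ell)_{ji}(Z_\ell)_{ki}] = (d/n)\delta_{jk}$ for a random $d$-sparse column) shows this equals $(1/d)\,uZ'Z'^\top v^\top$ with $Z' = Z_1\cdots Z_{\ell-1}(1/\sqrt d)^{\ell-1}$. Lemma \ref{lem:sparsedl} bounds this coefficient in absolute value by $\delta_2 + 2\ell M^2\log^2 n/\sqrt n$ w.h.p. Summing over $i$ and multiplying by $d/n$ therefore gives back $uZ'Z'^\top v^\top$, i.e.\ the ``one-layer-shallower'' product; using $E[ZZ^\top] = I$ together with the inductive content of Lemma \ref{lem:sparsedl}, this in turn equals $uv^\top$ up to an error of order $\tilde O(d\delta_2 + d\ell M^2/\sqrt n)$.

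For the concentration, Lemma \ref{lem:single-sparse} gives $|q_{\ell i}|, |w_{\ell i}| \le M\sqrt{c\log n}$ w.h.p., so $|q_{\ell i}w_{\ell i}| \le M^2 c\log n$; Hoeffding's inequality on the i.i.d.\ sum then yields $\bigl|\sum_i q_{\ell i}w_{\ell i} - nE_{Z_\ell}[q_{\ell i}w_{\ell i}]\bigr| = \tilde O(M^2\sqrt n)$ w.h.p., which contributes an extra $\tilde O(dM^2/\sqrt n) = \tilde O(dM/\sqrt n)$ (using $M = d^{o(1)}$) after the outer factor of $d/n$. Adding the mean and deviation contributions and substituting $\delta_2 = M/d^2$ together with the definitions of $\epsilon_1$ and $\epsilon_2$ yields exactly $\tilde O(\epsilon_1 + d\epsilon_2 + dM/\sqrt n)$. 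The main obstacle is the bookkeeping: verifying that the characteristic-function bound from Lemma \ref{lem:sparsedl} is applied so that the $uv^\top/d$ ``signal'' inside $E_{Z_\ell}[q_{\ell i}w_{\ell i}]$ is preserved through the rescaling rather than swept into the zero-centred range, so that the final estimate is centred at $uv^\top$ and not at $0$, and that the scaling factors $\sqrt{n/d}$ and $d/n$ telescope correctly across the $\ell$ layers without leaking spurious polynomial factors of $n$ or $d$.
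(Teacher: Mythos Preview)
Your route differs from the paper's and has a real gap at exactly the point you flag as ``the main obstacle.'' Lemma~\ref{lem:sparsedl} (and its base case Lemma~\ref{lem:sparselogdn}) are proved under the hypothesis that $u$ and $v$ are \emph{disjoint} in support; this is why the coefficient of $st$ there is centred at $0$ rather than at $uv^\top/d$. So when $u$ and $v$ share nonzero coordinates --- the only case in which the target $uv^\top$ is nonzero --- invoking Lemma~\ref{lem:sparsedl} to control $E_{Z_\ell}[q_{\ell i}w_{\ell i}]$ simply does not apply, and nothing in its ``inductive content'' tracks the signal $uv^\top$. Your identity $E_{Z_\ell}[q_{\ell i}w_{\ell i}]=(1/d)\,uZ'Z'^{\top}v^\top$ is correct and does suggest a layer-peeling induction, but you never actually run that induction; you instead fall back on Lemma~\ref{lem:sparsedl}, which cannot supply the centring. (Running the induction directly is also not free: the boundedness $|q_{\ell i}|\le M\sqrt{c\log n}$ you need for Hoeffding is only available for $\ell\ge\log_d n-1$, so the first $\log_d n$ layers would need separate treatment.)

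The paper avoids this entirely by decomposing according to shared support. Write $\tilde u,\tilde v$ for the restrictions of $u,v$ to their common support (of size $k=O(\log n)$ w.h.p.). Then $uZZ^\top v^\top$ splits into (i) diagonal terms $\sum_{i\le k}u_iv_i\,e_iZZ^\top e_i^\top$, handled by the auto-correlation estimate of Lemma~\ref{lem:sparseuu} to give $uv^\top\pm O(\epsilon_1+\cdots)k$; and (ii) $O(k^2)$ cross terms in which the two vectors are genuinely disjoint, each bounded via the disjoint-case lemma by $d(\epsilon_2+M^2\log^2 n/\sqrt n)$. The $uv^\top$ signal comes entirely from the diagonal piece, and the $\epsilon_1$ contribution in the final bound (which your sketch never produces) is precisely the error from Lemma~\ref{lem:sparseuu}. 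This decomposition is the missing idea in your argument.
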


For $\ell \le \tilde O(\sqrt n/(dM^2)) = \sqrt n/d^{1+o(1)}$ the difference from $u^\top v$ is o(1). So rounding it to the nearest integer will give exactly $uv^\top $.

\begin{lemma}
If $u,v$ are disjoint (that is do not share a non-zero column) then
w.h.p. $|uZZ^\top v| \le O(d\epsilon_2 + dM^2\log^2 n/\sqrt{n})$
\end{lemma}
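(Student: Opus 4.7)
The plan is to repeat, in the sparse setting, the Gaussian-case argument already used in the lemma right after Lemma \ref{lem:termwise-gaussian}, replacing the Gaussian characteristic-function bounds by the sparse analogs in Lemma \ref{lem:single-sparse} and Lemma \ref{lem:sparsedl}. First I rescale: using the normalization $q_\ell = \sqrt{n/d}\,uZ$ and $w_\ell = \sqrt{n/d}\,vZ$ from the preceding subsection, we have $uZZ^\top v^\top = (d/n)\sum_{i=1}^n q_{\ell i} w_{\ell i}$, so it suffices to control this sum.

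For the mean, the coefficient of $st$ in the joint characteristic function $\Phi_{D_\ell,\Gamma_\ell}(s,t)$ is exactly $E[q_{\ell i} w_{\ell i}]$. By Lemma \ref{lem:sparsedl}, expanding $(s+t)^2$ gives $|E[q_{\ell i} w_{\ell i}]| \le \delta_2 + 2\ell M^2 \log^2 n/\sqrt n \le \epsilon_2$, where the disjointness of $u$ and $v$ is what kills the base-case contribution ($q_{0i}w_{0i} = u_i v_i \equiv 0$) and lets us absorb the bound into $\epsilon_2$ rather than picking up a $uv^\top$ term as in Lemma \ref{lem:uv}. Summing over $i$, the aggregate expectation is at most $n \epsilon_2$ in absolute value.

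For the deviation, I condition on $u,v,Z_1,\dots,Z_{\ell-1}$; the columns of $Z_\ell$ are independent, so the $n$ summands $q_{\ell i} w_{\ell i}$ become independent. By Lemma \ref{lem:single-sparse}, with high probability each of $|q_{\ell i}|, |w_{\ell i}|$ is at most $M\sqrt{c\log n}$, so each summand is bounded in magnitude by $M^2 c\log n$. Hoeffding's inequality then gives
\[ \Big|\sum_{i=1}^n \bigl(q_{\ell i}w_{\ell i} - E[q_{\ell i}w_{\ell i}]\bigr)\Big| \le M^2 \log^2 n \sqrt n \]
with polynomially small failure probability. Adding the mean and deviation bounds and rescaling by $d/n$ gives $|uZZ^\top v^\top| \le d\epsilon_2 + dM^2\log^2 n/\sqrt n$, which is the claim. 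The only point that needs care is combining the several high-probability events (the tail bound on each $q_{\ell i}, w_{\ell i}$, the characteristic-function bound from Lemma \ref{lem:sparsedl}, and the Hoeffding event) by a union bound over at most $n$ coordinates; since each event fails with at most inverse-polynomial probability this is routine.
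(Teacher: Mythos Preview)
Your argument is correct and is essentially the paper's own proof: condition on $u,v,Z_1,\dots,Z_{\ell-1}$, read off $E[q_{\ell i}w_{\ell i}]$ as the $st$-coefficient in $\Phi_{D_\ell,\Gamma_\ell}$ via Lemma~\ref{lem:sparsedl}, bound each summand by $M^2 c\log n$ via Lemma~\ref{lem:single-sparse}, apply Hoeffding, and rescale by $d/n$. The only step the paper includes that you omit is a one-line reduction to the regime where Lemmas~\ref{lem:single-sparse} and~\ref{lem:sparsedl} apply: those lemmas are proved by induction with base level $\ell=\log_d n-1$, so for smaller $\ell$ the paper first pads $uZ_1\cdots Z_\ell$ on the right by enough additional independent random $d$-sparse matrices to push the depth past the threshold (this is harmless since the bound is monotone in $\ell$).
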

\begin{proof}
It is sufficient to prove this for $\ell\geq \log n$ because for $\ell < \log n$ we can always multiply $uZ_1\dots Z_\ell$ from the right side by enough number of artificially random matrices. Now by lemma \ref{lem:sparsedl} the coefficient of $t^2$ in $\Phi_{D_\ell}(t)$ be in the range $\frac{1}{2} \pm \epsilon_1$. Assume $q_1,\dots,q_n$ are $n$ samples generated from this distribution. We also know that $q_i \leq M\sqrt{c\log n}$ with high probability. The same bound holds for $W_i$. So $|q_iw_i|$ is bounded by $M^2c\log n$. $E[q_iw_i]$ is the coefficient of $st$ in $\Phi_{D_\ell,\Gamma_\ell}(s,t)$ which is $\epsilon_2$. So using Hoeffding's inequality, we have that with high probability:
$$
|\sqrt{\frac{n}{d}}uZZ^\top v^\top \sqrt{\frac{n}{d}}| = |\sum_ i q_iw_i| \leq n\epsilon_2 + c^3M^2\sqrt{n}\log^2 n
$$
\end{proof}

\begin{lemma}\label{lem:sparseuu}
If $u$ is a sparse vector then
w.h.p. $|uZZ^\top u'|/d \in 1 \pm O(\epsilon_1+M^2\log^2 n/\sqrt n)$
\end{lemma}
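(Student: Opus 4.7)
The plan is to rewrite $uZZ^\top u^\top/d$ as a conditional empirical second moment and then control its mean and deviation, following the template already used for the $u,v$ disjoint case. Using the scaled vector $q_\ell = \sqrt{n/d}\,uZ_1\cdots Z_\ell(1/\sqrt d)^\ell$ from the sparse analysis, note that $q_\ell q_\ell^\top = (n/d)\,uZZ^\top u^\top$, so
\[
\frac{uZZ^\top u^\top}{d} \;=\; \frac{1}{n}\sum_{i=1}^n q_{\ell i}^2.
\]
Conditional on $u,Z_1,\dots,Z_{\ell-1}$, the entries $q_{\ell i}$ are i.i.d.\ samples from $D_\ell$, so it suffices to (a) pin the mean $E[q_{\ell i}^2]$ down near $1$ and (b) apply concentration to the empirical average.

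For step (a), I would invoke lemma~\ref{lem:sparsedl} with $t=0$: this collapses the bound on the joint truncated characteristic function to a bound on the marginal $H(\Phi_{D_\ell}(s,0))$, whose $s^2$ coefficient is exactly $E[q_{\ell i}^2]/2$. The lemma then gives $|E[q_{\ell i}^2]-1|\le \delta_1 + 2\ell M^2\log^2 n/\sqrt n = O(\epsilon_1)$, using that $\epsilon_1 = \tilde O(1/\sqrt d + \ell M^2/\sqrt n)$ already absorbs the $\ell$-dependent term. For step (b), lemma~\ref{lem:single-sparse} gives $|q_{\ell i}|\le M\sqrt{c\log n}$ w.h.p., so each $q_{\ell i}^2$ is bounded by $M^2 c\log n$; together with conditional independence, Hoeffding's inequality yields
\[
\Bigl|\tfrac{1}{n}\sum_i q_{\ell i}^2 - E[q_{\ell i}^2]\Bigr| \;=\; \tilde O\!\left(M^2/\sqrt n\right)
\]
with high probability. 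Adding the bounds from (a) and (b) produces $|uZZ^\top u^\top/d - 1| = O(\epsilon_1 + M^2\log^2 n/\sqrt n)$, which is the claim (and the absolute value is redundant since the left-hand side is a sum of squares).

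The only subtle point is that lemma~\ref{lem:sparsedl} is proved under the assumption that $u$ and $v$ are \emph{independent} random $d$-sparse vectors, whereas here we have $u=v$. But the specialization at $t=0$ only depends on the marginal distribution of $q_{\ell i}$, which is oblivious to any dependence between $u$ and itself, so the bound transfers intact. A minor bookkeeping check is that the tail bound of lemma~\ref{lem:single-sparse} and the Hoeffding step are both conditioned on the same sigma-algebra generated by $u,Z_1,\dots,Z_{\ell-1}$, making the conditional i.i.d.\ application immediate. Apart from these checks, the proof is essentially a re-run of the calculation already used for disjoint $u,v$ in the preceding lemma, with the only change being the reappearance of the $s^2$ coefficient (mean near $1$) in place of the $st$ coefficient (covariance near $0$).
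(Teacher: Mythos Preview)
Your proposal is correct and follows essentially the same route as the paper: both set $q=\sqrt{n/d}\,uZ$, read off $|E[q_i^2]-1|\le\epsilon_1$ from the degree-$2$ coefficient bound on $\Phi_{D_\ell}$, bound $|q_i|\le M\sqrt{c\log n}$ via lemma~\ref{lem:single-sparse}, and apply Hoeffding to $\sum_i q_i^2$. Your proof is in fact a little more careful than the paper's in spelling out exactly which lemma supplies each ingredient and in noting that the $t=0$ specialization of lemma~\ref{lem:sparsedl} sidesteps the $u=v$ issue.
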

\begin{proof}
Let $q=\sqrt{\frac{n}{d}}uZ$. We know that entries $q_i$ in the vector $q$ are independent conditioned on the value of $u,Z_1,\dots,Z_{\ell-1}$. Moreover, $1-\epsilon_1 \leq E[q_i^2] \leq 1+\epsilon_1$ and the absolute value of $q_i$ is bounded by $M\sqrt{c\log n}$. Now using, Hoeffding's inequality with high probability, we following bound holds:
$$
\sqrt{\frac{n}{d}}uZZ^\top u^\top \sqrt{\frac{n}{d}}=\sum_i q^2_i \in n\pm n\epsilon_1\pm c^2M^2\sqrt{n}\log^2 n
$$
\end{proof}

We will now give the proof of lemma \ref{lem:uv}.

\begin{proof}(of lemma \ref{lem:uv})
Without loss of generality, assume that $u$ and $v$ are both non-zero at the first $k$ entries and not simultaneously non-zero in the remaining entries. Now let $\tilde{u}$ be a vector such that for any $i\leq k$, $\tilde{u}_i=u_i$ and $\tilde{u}_i=0$ otherwise and define $\tilde{v}$ in the same way. We have that:
\begin{eqnarray*}
uZZ^\top v^\top  &=& \sum_{i\leq k}u_iv_ie_iZZ^\top e_i^\top  + \sum_{i\neq j,i,j\leq k}u_iv_je_iZZ^\top e^\top _j \\
&+& \tilde{u}ZZ^\top (v-\tilde{v})^\top  + (u-\tilde{u})ZZ^\top \tilde{v}^\top  + \tilde{u}ZZ^\top \tilde{v}^\top \\
\end{eqnarray*}
Note that $e_iZZ^\top e_i^\top =xZ_2\dots Z_\ell Z^\top _\ell \dots Z_2^\top x^\top /d^{\ell/2}$ where $x$ is $i$th row of $Z_1$. So the bound from lemma \ref{lem:sparseuu} applies:
$$
\sum_{i\leq k}u_iv_ie_iZZ^\top e^\top _i = uv^\top  \pm O(\epsilon_1 + M^2\log^2 n/\sqrt{n})k
$$
In all remaining $O(k^2)$ terms, vectors are disjoint and with high probability, the sum of their absolute values  is bounded by
$$
d(k^2+3)(\epsilon_2 +M^2\log^2 n /\sqrt{n})
$$
Since with high probability, $k=O(\log n)$ the bound will be at most:
$$
d(\epsilon_2\log^2 n +M^2\log^4 n /\sqrt{n})
$$
The lemma then follows from adding the bound on the terms:
$$
uZZ^\top v^\top  = uv^\top +d(\epsilon_2\log^2 n +M^2\log^4 n /\sqrt{n}) + O(\epsilon_1 + M^2\log^2 n/\sqrt{n})k
$$
\end{proof}



\section{Obtaining $X$ from $XX^\top $}
In this section, we show the following
\begin{theorem}\label{thm:getX}
There is an algorithm that correctly recovers $X$ from $XX^\top $ with high probability.
\end{theorem}

The following algorithm can be used to recover $X$ from $XX^\top $. Note that $XX^\top $ gives us the correlations among $n$ rows of $X$ that were obtained by rounding the correlations among the rows of $Y$. We will show how to reconstruct the layer of weighted edges (corresponding to non-zero entries of $X$). The edges need to be recovered between the hidden nodes (that correspond to $n$ columns of $X$) and the outputs (corresponding to the $n$ rows of $X$). Assume we have identified a correlated pair of entries $X_i,X_j$ whose supports intersect in exactly one column say $g$. The following algorithm recovers the column $g$ denoted by $(X^\top )_g$ (upto negation) by connecting the edges to the corresponding hidden node.

{\bf Initial Step:} a)Create a new hidden node $p$ and connect it to output nodes $i$ and $j$ that are correlated.
b)Connect the new hidden node to all output nodes $k$ that are correlated to to both $i$ and $j$.

If nodes $i$ and $j$ have exactly one common neighbor to the layer above (that is they share one non-zero column) then the above algorithm constructs one hidden node correctly except that it may miss $o(1)$ fraction of the nodes. Let $S$ denote the set of nodes under the new hidden node $p$. Let $supp(v)$ denote the support of vector $v$.

\begin{claim}\label{lem:oneintersect}
If $X_i$ and $X_j$ share exactly one non-zero column $g$, then $|S - \supp((X^\top )_g)| \le o(d)$
\end{claim}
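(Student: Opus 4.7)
The plan is to upper bound the false-positive set $S \setminus \supp((X^\top)_g)$. Writing $T_k := \supp(X_k^\top)$ for the set of columns where row $k$ of $X$ is non-zero, the hypothesis gives $T_i \cap T_j = \{g\}$, and $k \in \supp((X^\top)_g)$ exactly when $g \in T_k$. So a ``bad'' node is $k \in S$ with $g \notin T_k$, i.e.\ $X_{k,g}=0$ but $X_k X_i^\top \neq 0$ and $X_k X_j^\top \neq 0$.

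First I would observe that since $X_{k,g}=0$ for a bad $k$, having non-zero correlation with $X_i$ forces $T_k$ to meet $T_i \setminus \{g\}$, and non-zero correlation with $X_j$ forces $T_k$ to meet $T_j \setminus \{g\}$. Because $T_i \cap T_j = \{g\}$, these two target sets are disjoint and each has size at most $d$. So every bad $k$ must have $T_k$ hit two specific disjoint sets of size $\le d$. (Cancellation in the inner products can only remove nodes from $S$, never add them, so ignoring it gives a valid upper bound.)

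Next I would estimate the expected number of bad $k$ using the generative model: columns of $X$ are independent sums of $d$ random $1$-sparse vectors, so $\Pr[k \in \supp((X^\top)_{g'})] \le d/n$ for each column $g'$, and the events across distinct columns are independent. A union bound gives $\Pr[T_k \cap (T_i\setminus\{g\}) \neq \emptyset] \le (d-1)\cdot d/n \le d^2/n$ and the analogous bound for $j$; since these two events depend on the supports of disjoint sets of columns of $X$, they are independent, so the probability $k$ is bad is at most $O(d^4/n^2)$. Summing over $k \in [n]$, the expected number of bad nodes is $O(d^4/n)$.

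Finally, under $d \le \tilde O(n^{1/6})$ we have $d^4/n = \tilde O(n^{-1/3}) = o(d)$ (indeed $o(1)$), so Markov's inequality shows that with probability $1-\tilde O(n^{-1/3})$ there are no bad nodes at all, which trivially implies $|S - \supp((X^\top)_g)| = o(d)$ w.h.p. The main technical point to watch is the near-independence of the column supports under the sum-of-$1$-sparse model -- two of the $d$ random $1$-sparse vectors forming a column could collide at the same row, creating dependencies -- but since $d^2 = \tilde O(n^{1/3}) \ll n$ such collisions do not occur within any relevant column w.h.p., so the support sets $T_k$ behave like uniformly random $d$-subsets and the counting argument goes through cleanly.
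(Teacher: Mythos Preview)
Your argument is essentially the paper's (the second half of Lemma~\ref{lem:prob}): both observe that a bad node $k$ with $g\notin T_k$ must hit the disjoint sets $T_i\setminus\{g\}$ and $T_j\setminus\{g\}$, giving per-node probability $O(d^4/n^2)$ and expected count $O(d^4/n)=o(1)$ under $d\le\tilde O(n^{1/6})$. The only difference is that the paper invokes Bernstein's inequality rather than Markov, which yields a much smaller failure probability---this matters later when one must union-bound over all $O(n^2)$ pairs $(i,j)$, but your Markov step is sufficient for the claim as stated about a single pair.
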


The following simple pruning can be used to fix the erroneous nodes.

{\bf Prune Step:} Drop output nodes $k$ that are not correlated to most nodes in $S$ (more than $1-o(1)$ fraction) and add output nodes $k$ that are correlated to
most nodes in $S$.

\begin{claim}\label{lem:oneintersect}
If $X_i$ and $X_j$ share exactly one non-zero column $g$ then after the prune step the set of nodes that remain is identical to $supp((X^\top )_g)$. Otherwise the set is empty.
\end{claim}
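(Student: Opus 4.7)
The plan is to use the initial-step guarantee (from the preceding claim), which gives $|S \triangle \supp((X^\top)_g)| = o(d)$ when $X_i, X_j$ share exactly one non-zero column $g$, and then analyze the prune step by subdividing the rows of $X$ into four subsets relative to $S$ and $\supp((X^\top)_g)$. Since each column support and each row of $X$ has $\Theta(d)$ non-zero entries w.h.p., and $|S| = \Theta(d)$, the sets $S$ and $\supp((X^\top)_g)$ agree on a $1-o(1)$ fraction. The essential random-model ingredient I need is the following coincidence estimate: for any row index $k \notin \supp((X^\top)_g)$, the expected number of rows $m \in \supp((X^\top)_g)$ whose support shares a column with that of $X_k$ is $O(d^3/n)$, which is $o(d)$ in the regime $d \le \tilde{O}(n^{1/6})$. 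A union bound over $k$ then shows that w.h.p.\ every such $k$ is correlated with only $o(|S|)$ members of $S$.

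Given this estimate, the first case breaks into four checks that together yield the claimed equality. First, every $k \in \supp((X^\top)_g) \cap S$ is correlated with every other element of $\supp((X^\top)_g)$ through $g$, hence with a $1-o(1)$ fraction of $S$, so it survives the prune. Second, every spurious $k \in S \setminus \supp((X^\top)_g)$ is correlated with only $o(|S|)$ members of $S$ by the coincidence estimate, so it is dropped. Third, every missing $k \in \supp((X^\top)_g) \setminus S$ is correlated through $g$ with essentially all of $\supp((X^\top)_g) \cap S$, a $1-o(1)$ fraction of $S$, so it is added back. Fourth, every outsider $k$ avoiding both sets is correlated with only $o(|S|)$ of $S$ by the same estimate, so is not added. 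Putting these together gives exactly $\supp((X^\top)_g)$.

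For the second case, suppose $X_i, X_j$ share columns $g_1, \ldots, g_r$ with $r \ge 2$ (if $r = 0$ their correlation is zero and the algorithm is never invoked on this pair). Then $S$ is approximately $\bigcup_t \supp((X^\top)_{g_t})$ up to an $o(d)$ error; moreover the expected number of rows belonging to two or more of the $\supp((X^\top)_{g_t})$ is $O(r^2 d^2/n) = o(1)$ in the regime of interest, so w.h.p.\ these supports are pairwise disjoint and $|S| \approx r\,d$. Then every member of $S$ lies in exactly one $\supp((X^\top)_{g_t})$ and is correlated through a shared column with only about $d \approx |S|/r$ members of $S$, plus an $o(d)$ coincidence term, yielding at most a $1/r + o(1) \le 1/2 + o(1)$ fraction, far below the $1-o(1)$ threshold. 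Therefore no node survives the prune, and the resulting set is empty.

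The main obstacle will be making the coincidence estimate uniform in $k$: controlling spurious shared-column events for every $k \in [n]$ at once, and confirming that the ``correlated'' predicate used by the algorithm (read from the rounded $YY^\top$) actually matches $X_k X_m^\top \ne 0$ in $XX^\top$, which holds w.h.p.\ by Lemma \ref{lem:uv}. The former is a standard second-moment and union-bound argument, but it relies essentially on the regime $d \le \tilde{O}(n^{1/6})$ so that $d^3/n = o(d)$ comfortably, and on the independent uniform placement of non-zero positions in each column of $X$.
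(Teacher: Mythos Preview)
Your proposal is correct and follows essentially the same line as the paper's proof (which is folded into the proof of Theorem~\ref{thm:getX} via Lemmas~\ref{lem:prob} and~\ref{lem:prun}): your four-case analysis is an explicit unpacking of those two lemmas, and your coincidence estimate $O(d^3/n)=o(d)$ is the same probabilistic input the paper uses. One small remark: in the multi-column case the paper only argues that at most $O(\log n)$ nodes survive the prune and then discards such hidden nodes, whereas your disjointness argument (that w.h.p.\ no row contains two of the $g_t$'s) gives the stronger ``empty'' conclusion stated in the claim; both are fine in the regime $d\le \tilde O(n^{1/6})$.
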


Thus we can get one hidden node for each column $g$ by considering all pairs $X_i, X_j$.

{\bf Obtaining edge weights:} The weights on the edges can be obtained as follows: First set the sign of the weight of an edge from a node $k$ to the sign of its correlation to node $i$. Again, this will give the correct sign $w_k$ for most ($1-o(1)$ fraction) of edges. Next, flip $w_k$ if the correlation between $w_k X_k$ and $w_{k'}^\top X_{k'}^\top$ is negative for most $k' $ in $S$. Now all the wrong signs get corrected. The magnitude of the weight $w_k$ is set to the majority value of $w_k w_{k'} X_k {X _{k'}}^\top$ over $k'  \in S$.

The proof of the correctness of the algorithm is in appendix \ref{sec:recovery}.

\section{Conclusions}
We studied the problem of Sparse Matrix Factorization and  explored
its relationship to learning deep networks. For the linear case and
for sparse random matrices we showed a simple natural algorithm that
is able to  reconstruct the linear deep network (that corresponds to
factorizing the matrix) by simply finding correlated nodes in the
lower layer. This works as long as the sparsity and the depth are
under $O(n^{1/6})$.

In terms of future directions, it would be interesting to find
natural algorithms that can reconstruct non-linear networks (that
corresponds to non-linear factorization) for large depths -- this is
already known for depths up to $O(\log_d n)$. Another interesting
direction is to find algorithms that work  other types of
distributions besides the sparse random matrices or for lower levels
of sparsity.


\bibliographystyle{plain}
\bibliography{smf}

\appendix


\section{Circuits, Deep Networks and Compression}\label{sec:kolcircuit}

A neural network with $s$ layers of nodes can be represented using matrices $X_1,\dots ,X_s$. Here $X_s$ is the matrix of inputs (each column is a separate input) and $Y = \sigma( X_1. \sigma( X_2 . \sigma( \dots \dots X_s)))$ is the matrix of outputs.
On an input $z$ from the upper layer the $\ell th$ layer of edges produces an output $y = \sigma(X_\ell z)$ and gives it to the lower layer. Instead of thinking of $X_s$ as a matrix of inputs one can also think of it as a layer of edges -- then to produce the $j$th column in $Y$, instead of feeding in the $j$th column of $X_s$ as input at the top, one just feeds the vector $e_j$ (that is $1$ at the $jth$ coordinate and $0$ elsewhere). This produces a network where each output vector is produced by turning on exactly one input node.

More generally one can ask the ideal circuit using and/or/not gates that represents the images in the matrix $Y$ where each image corresponds to turning on one input at the top layer of the circuit. This is just a circuit version of the Kolmogorov complexity (Kolmogorov complexity of a binary string is the smallest turing machine that outputs that string). Given a circuit with $m$ inputs and $n$ outputs we will say that it produces the matrix $Y$ if $i$th column in $Y$ is output by turning on the $ith$ input in the circuit (and setting the other inputs to $0$).

\begin{Problem}[Circuit-Kolmogorov Complexity of a Matrix]
Given an $n \times m$ matrix $Y$ with $0,1$ entries, find the circuit (using and/or/not gates) with the smallest number of edges that produces matrix $Y$. One may also restrict to layered circuits where edges are present only between consecutive layers.
\end{Problem}

By using either the sigmoid or the step function or the sign function at each node, one can express any circuit. In fact this is possible even when the weights are restricted to only $0, \pm 1$. Any (layered) circuit can be converted to a neural network and vice versa with at most $O(\log n)$ blow up in size. This is because the thresholded sum of $k$ bits can be computed using a layered circuit of size at most $\tilde O(k)$.

The smallest circuit that produces a matrix can be viewed as a compressed representation of the matrix. The number of bits required to encode the circuit is not exactly the the number of edges but close; one will need $O( \log n)$ blow up to express the node id's that the edge connects. Thus in this sense one can think of the number of edges in the circuit as capturing the circuit version of the Kolmogorov complexity of the matrix.

Since the sign function can be used to emulate and/or/not gates
\begin{observation}
When restricted to layered circuits, the Circuit-Kolmogorov-complexity of a matrix $Y$ is equivalent to the optimal Non-linear sparse matrix factorization upto $\tilde O(1)$ multiplicative factors.
\end{observation}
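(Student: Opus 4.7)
The plan is to prove the observation by exhibiting two efficient simulations between layered AND/OR/NOT circuits and non-linear sparse matrix factorizations, each preserving complexity up to a polylogarithmic factor. Concretely, I need to show (a) any layered circuit with $E$ edges producing $Y$ yields matrices with total sparsity $O(E)$ whose $\sigma$-composition equals $Y$, and (b) any factorization with total sparsity $S$ yields a layered AND/OR/NOT circuit with $\tilde O(S)$ edges producing $Y$.

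Forward direction (circuit to factorization): Given a layered circuit $C$ with $E$ edges, depth $s$, and $n$-row output matrix $Y$, I would build matrices $X_1,\ldots,X_s$ layer by layer. For each gate $g$ at layer $\ell$ with fan-in $k$, encode one row of $X_\ell$ by at most $k+1$ nonzero weights that realize $g$ via the sign function: an AND is $\sigma\!\left(\sum_i x_i - k + 1/2\right)$, an OR is $\sigma\!\left(\sum_i x_i - 1/2\right)$, and a NOT is $\sigma(1/2 - x)$. The bias constants are implemented by reserving one "always-$1$" coordinate at each layer, using at most $O(s)$ extra nonzeros — negligible compared to $E$. Summing over all gates gives total sparsity $O(E)$, and correctness follows gate-by-gate from the fact that each row evaluates to the desired Boolean function on $\{0,1\}$ inputs.

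Reverse direction (factorization to circuit): Given matrices $X_1,\ldots,X_s$ of total sparsity $S$ with $\sigma(X_1\sigma(X_2\cdots\sigma(X_s)))=Y$, I would replace each application of $\sigma$ to a sparse weighted sum by a Boolean subcircuit. First, round each weight of every $X_\ell$ to $O(\log(nS))$ bits of precision; since $Y$ is $\{-1,0,1\}$-valued and each intermediate signal is bounded (being a sparse combination of bounded inputs), I argue that the total perturbation introduced by rounding is at most $1/\mathrm{poly}(n)$ at every node, so no $\sigma$ output changes sign. After this reduction to integer weights of magnitude $\mathrm{poly}(n)$, computing $\sigma(\sum_{i} w_i x_i)$ on a fan-in-$k$ node reduces to adding $k$ numbers of $O(\log n)$ bits and comparing to $0$. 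This thresholded sum can be carried out by a standard layered addition circuit (Wallace-tree plus comparator) of size $\tilde O(k)$ and depth $O(\log k)$. Summing $\tilde O(k)$ over all $S$ nonzero edges of the factorization gives total edge count $\tilde O(S)$ for the circuit.

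The main obstacle, and the source of the $\tilde O(1)$ slack rather than a clean equality, is the reverse direction and specifically the handling of real-valued weights and the degenerate $\sigma(0)=0$ case. I would dispose of it by the following two observations: (i) without loss of generality, the optimum factorization may be taken to have rational weights of denominator $\mathrm{poly}(n)$, because $\sigma$ is invariant under sufficiently small perturbations of a nonzero argument, and any node whose pre-activation is exactly $0$ can be made strictly nonzero by an arbitrarily small symbolic perturbation since $Y$'s entries are already $\pm 1$ or $0$ and can be realized in either case; (ii) once weights are integers, the standard circuit for thresholded integer sum of $k$ $b$-bit numbers has size $O(kb)$, contributing the single $O(\log n)$ factor that yields the $\tilde O(1)$ blowup. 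Combining (a) and (b) gives the equivalence up to $\tilde O(1)$ multiplicative factors claimed in the statement.
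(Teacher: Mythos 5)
Your argument tracks the paper's intended justification closely: both directions rest on (a) sign gates emulating AND/OR/NOT with $O(1)$ nonzeros each, and (b) a layered circuit of size $\tilde O(k)$ computing the thresholded sum of $k$ inputs, the latter supplying the $\tilde O(1)$ blowup. Your additional care in rounding real weights to $O(\log n)$ bits is a sensible tightening of what the paper states only in passing, though your disposal of the $\sigma(0)=0$ case by ``symbolic perturbation'' is not quite rigorous --- perturbing an exact zero pre-activation to $\pm\epsilon$ flips $\sigma$ from $0$ to $\pm 1$, which can propagate and alter $Y$ --- a gap that, to be fair, is also left unaddressed in the paper's brief remark.
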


Pseudorandom number generators can provably produce a matrix of pseudo random bits from a smaller input matrix of random bits assuming the existence of one-way functions. Thus being able to compress the output matrix would correspond to inverting one way functions which is as hard a integer factoring.


\section{Reversibility of Deep Networks}\label{sec:reversibile}

\cite{arora2} argued that for a random matrix of edge weights, each layer of the neural network is reversible under certain conditions which matches the underlying philosophy of RBMs.
A similar argument is possible for the linear case if the weight matrix $X$ is invertible.
Note that if an output vector $y$ has been produced from input $z$ then $y = Xz$.

A natural method to try to recover $z$ from $y$ is to go back along edges in the reverse direction giving $X^\top y = X^\top Xz$.
Now if $X$ is a random $d$-sparse matrix then if appropriately scaled by $1/\sqrt d$, $X^\top X$ is equal to the identity matrix $I$ in expectation. Thus the expected error in computing $z$ in this way is $0$ over the randomness of $X$.

It is also possible to compute $Z$ exactly by iteratively correcting it as long as $X$ is not a singular matrix; this corresponds to the simple standard iterative algorithm for solving a linear system.
We iteratively compute values $z_1,\dots ,z_k$ at the top layer that converge to the true value of $z$.
Initialize $z_1 = X^\top y$ by just computing the network backwards. Then correct $z_i$ by propagating a fraction of the error $y - X z_i$ backwards. That is $z_{i+1} = z_i + X^\top  \gamma (y - X z_i)$
Over $k$ iterations the error $y - X z_k = (I - \gamma XX^\top ) (y - X z_{k-1}) = (I - \gamma XX^\top )^k (y - X z_1)$

Now if $X$ is not singular then by setting $\gamma$ to be smaller than the largest eigenvalue of $XX^\top $ we get a matrix with eigenvalue strictly between $0$ and $1$. So if the number of iterations $k$ is much more than the condition number, the error converges to $0$.

For random dense matrices it is known that their condition number if polynomially bounded with high probability. However this is not been proven yet for sparse random matrices.


\section{Proof of lemma~\ref{lem:termwise-gaussian}}\label{sec:joint}
We first need to find an upper bound on the the joint characteristic function of $q_{\ell i},w_{\ell i}$:
\begin{lemma}
\label{lem:joint-gaussian}
If $u,v$ are independently distributed as $\NN(0,1)^n$, then with high probability over the values of $u,v,Z_1,\dots ,Z_{\ell-1}$, $\Phi_{q_{\ell i}, w_{\ell i}}(s,t) \le e^{\frac{s^2+t^2}{2}+\frac{\ell (s+t)^2c^4\log^2 n}{2\sqrt n}}$.
\end{lemma}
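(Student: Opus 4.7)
The plan is to mimic the inductive structure of Lemma~\ref{lem:single-gaussian} but applied to the joint distribution of $(q_{\ell i}, w_{\ell i})$. At the base $\ell = 0$ we have $q_0 = u$, $w_0 = v$, and since $u,v$ are independent $\NN(0,1)^n$ coordinates, $\Phi_{D_0,\Gamma_0}(s,t) = e^{(s^2+t^2)/2}$, matching the claimed bound. The key structural observation I would exploit is that $q_\ell$ and $w_\ell$ are both obtained from $q_{\ell-1}$ and $w_{\ell-1}$ by multiplying on the right by the \emph{same} matrix $Z_\ell$: thus $q_{\ell i} = \frac{1}{\sqrt d}\sum_{j=1}^d \alpha_j q_{\ell-1,\pi_j}$ and $w_{\ell i} = \frac{1}{\sqrt d}\sum_{j=1}^d \alpha_j w_{\ell-1,\pi_j}$, with the same random signs $\alpha_j$ and indices $\pi_j$. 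This coupling is precisely what produces a $(s+t)^2$ rather than $s^2+t^2$ in the error exponent.

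For the inductive step, I would introduce the empirical joint characteristic function
$$P_\ell(s,t) = \frac{1}{n}\sum_i \frac{e^{sQ_i+tW_i} + e^{-sQ_i - tW_i}}{2}$$
conditioned on the previous-layer vectors $(Q_i,W_i)$, so that $\Phi_{D_\ell,\Gamma_\ell}(s,t) = (P_\ell(s/\sqrt d, t/\sqrt d))^d$. Expanding $(e^{sQ_i+tW_i}+e^{-sQ_i-tW_i})/2$ as a sum of even powers of $(sQ_i+tW_i)$ gives $\mathbb{E}[P_\ell(s,t)] = \Phi_{D_{\ell-1},\Gamma_{\ell-1}}(s,t)$. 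Lemma~\ref{lem:single-gaussian} already gives us $|Q_i|,|W_i| \le \sqrt{c\log n}$ w.h.p., so $|sQ_i+tW_i| \le (|s|+|t|)\sqrt{c\log n}$ and each even power $(sQ_i+tW_i)^{2k}$ is a bounded random variable of magnitude at most $((s+t)^2 c\log n)^k$. Applying Hoeffding termwise as in Lemma~\ref{lem:single-gaussian} yields
$$|P_\ell(s,t) - \Phi_{D_{\ell-1},\Gamma_{\ell-1}}(s,t)| \le \frac{(s+t)^2 c^4 \log^2 n}{\sqrt n}$$
w.h.p.\ for $|s|,|t| \le 1/(c^2\log n)$.

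The final step is to combine this concentration with the inductive hypothesis. Rescaling $s\to s/\sqrt d$, $t\to t/\sqrt d$ and raising to the $d$-th power, I would write
$$\Phi_{D_\ell,\Gamma_\ell}(s,t) \le \Bigl(\Phi_{D_{\ell-1},\Gamma_{\ell-1}}(s/\sqrt d, t/\sqrt d) + \tfrac{(s+t)^2 c^4\log^2 n}{d\sqrt n}\Bigr)^d,$$
apply the inductive hypothesis to the first term, factor out the $(1+x)^d \le e^{dx}$ bound on the perturbation, and verify the exponents collapse into $e^{(s^2+t^2)/2 + \ell(s+t)^2 c^4\log^2 n / (2\sqrt n)}$. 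This is the same algebraic collapse carried out at the end of the proof of Lemma~\ref{lem:single-gaussian}; the only change is tracking $(s+t)^2$ instead of $t^2$ throughout.

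The main obstacle I anticipate is establishing the joint boundedness of the $(Q_i,W_i)$ pairs with high probability uniformly in $i$: Lemma~\ref{lem:single-gaussian} provides the marginal bound for each coordinate separately, but here we need the conjunction $\max_i |Q_i| \le \sqrt{c\log n}$ \emph{and} $\max_i |W_i| \le \sqrt{c\log n}$. A union bound over the $n$ coordinates and the two vectors suffices, but one should be careful that the high-probability events over $u,Z_1,\dots,Z_{\ell-1}$ and over $v,Z_1,\dots,Z_{\ell-1}$ share the $Z_i$'s, so the induction must be carried out jointly over both sequences; otherwise the constant $c$ would have to be inflated at each level. Beyond this bookkeeping, the error accumulation is essentially identical to the single-vector case, so no new ingredients are needed.
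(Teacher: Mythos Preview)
Your proposal is correct and follows essentially the same route as the paper's own proof: induction on $\ell$ with base case $e^{(s^2+t^2)/2}$, invoking Lemma~\ref{lem:single-gaussian} to bound $|Q_i|,|W_i|\le\sqrt{c\log n}$ w.h.p., bounding the deviation $P_\ell-\Phi_{D_{\ell-1},\Gamma_{\ell-1}}$ termwise via Hoeffding to get the $(s+t)^2$ error, and then the same rescale-and-raise-to-$d$ algebraic collapse. Your write-up is in fact more explicit than the paper's (which sketches the difference as the series $s^2c^4\log^2 n/(2!\sqrt n)+t^2c^4\log^2 n/(2!\sqrt n)+stc^4\log^2 n/\sqrt n+\dots$ and then says ``using the inductive hypothesis as before''); your observation about the shared signs $\alpha_j$ and indices $\pi_j$ forcing the $(s+t)^2$ coupling, and your remark about the union bound over both coordinate sequences, are both implicit in the paper but not spelled out there.
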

\begin{proof}
We prove this lemma by induction. At the base level $\Phi_{D_\ell,\Gamma_\ell}(s,t)=e^{(s^2+t^2)/2}$. So the lemma statement clearly holds at the base level. Assume that this is true up to $\ell-1$. By lemma \ref{lem:single-gaussian}, with high probability, all entries of $Q_{\ell-1}$ and $W_{\ell-1}$ are bounded by $\sqrt{c\log n}$. So similarly, we have that the difference is at most:
$$
s^2c^4\log^2 n/(\sqrt{n}2!) + t^2c^4\log^2 n/(\sqrt{n}2!) + stc^4\log^2 n/(\sqrt{n}) + \dots
$$
that is bounded by $e^{\frac{(s+t)^2c^4\log^2 n}{2\sqrt n}}$. Now, using the inductive hypothesis as before, we bound the final characteristic function.
\end{proof}

We will now bound each coefficient in $\Phi_{D_\ell,\Gamma_\ell}$ for terms of degree 2.
\begin{proof}(of lemma~\ref{lem:termwise-gaussian})
The assumption clearly holds at the base level. Let $\epsilon_{s,t}$ denote the term $ (s+t)^2\frac{c^4\log^2 n}{2\sqrt{n}}$. According to the proof in lemma \ref{lem:joint-gaussian}, at layer $\ell$, we have that:
$$
H(\Phi_{D_\ell}(s,t)) -\epsilon_{s,t} \preceq H(P_{\ell+1}(s,t)) \preceq H(\Phi_{D_\ell,\Gamma_\ell}(s,t)) +\epsilon_{s,t}
$$
where $\epsilon_{s,t} = \frac{\log^2 n}{\sqrt{n}}(s+t)^2$. By induction:
$$
1+(s^2+t^2)/2 - \ell\epsilon_{s,t}\preceq H(\Phi_{D_\ell,\Gamma_\ell}(s,t)) \preceq1+(s^2+t^2)/2 + \ell\epsilon_{s,t}
$$
So we have:
$$
1+(s^2+t^2)/2 - (\ell+1)\epsilon_{s,t}\preceq H(P_{\ell+1}(s,t)) \preceq 1+(s^2+t^2)/2 +(\ell+1)\epsilon_{s,t}
$$
We know $\Phi_{D_{\ell+1},\Gamma_{\ell+1}}(s,t) = (P_{\ell+1}(s/\sqrt{d},t/\sqrt{d}))^d$. Therefore:
$$
H\bigg[\bigg(1+\frac{s^2+t^2}{2}- (\ell+1)\epsilon_{s,t}\bigg)^d\bigg]\preceq H(\Phi_{D_{\ell+1}}(s,t)) \preceq H\bigg[\bigg(1+\frac{s^2+t^2}{2} + (\ell+1)\epsilon_{s,t}\bigg)^d\bigg]
$$
the truncation of up to degree 2 gives us:
$$
1+(s^2+t^2)/2 - (\ell+1)\epsilon_{s,t}\preceq H(\Phi_{D_{\ell+1},\Gamma_{\ell+1}}(s,t)) \preceq 1+(s^2+t^2)/2+(\ell+1)\epsilon_{s,t}
$$
\end{proof}


\section{Proof of lemma \ref{lem:sparselogdn}}\label{sec:sparse}
Lemma \ref{lem:sparselogdn} is the direct result of the following two lemmas:

\begin{lemma}\label{lem:phisparse1}
For $\ell \le \log_d n -1$, $1 -O((c^2 \log^2 n)^\ell) (s^2 +t^2) -\frac{O((4c^2 \log^2 n)^\ell)}{d^2}st \preceq H(\Phi_{D_\ell , \Gamma_\ell }(s,t))\preceq 1 + O((c^2 \log^2 n)^\ell) (s^2 +t^2) + \frac{O((4c^2 \log^2 n)^\ell)}{d^2}st$.

(Note that this means for  $\ell = \log_d n -1$ w.h.p.
$1 -O(M) (s^2 +t^2) -\frac{O(M)}{d^2}st \preceq H(\Phi_{D_\ell , \Gamma_\ell }(s,t))\preceq 1 + O(M) (s^2 +t^2) + \frac{O(M)}{d^2}st$)
\end{lemma}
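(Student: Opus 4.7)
I would prove this by induction on $\ell$, using the recursive form of $\Phi_{D_\ell,\Gamma_\ell}$. Conditional on $u, v, Z_1, \ldots, Z_{\ell-1}$, each coordinate $(q_{\ell,i}, w_{\ell,i})$ is built by signed-summing $d$ samples from $(q_{\ell-1}, w_{\ell-1})$ via the \emph{same} indices and signs (namely, the $i$-th column of $Z_\ell$), which gives
\begin{equation*}
\Phi_{D_\ell,\Gamma_\ell}(s,t) = \Biggl(\tfrac{1}{n}\sum_{j=1}^n \cosh\!\bigl(\tfrac{1}{\sqrt d}(s\, q_{\ell-1,j} + t\, w_{\ell-1,j})\bigr)\Biggr)^{d}.
\end{equation*}
Expanding $\cosh$ and truncating to degree at most $2$, only the linear term of the binomial expansion of $(\cdot)^d$ contributes to $H(\cdot)$ (all $k \ge 2$ terms have degree $\ge 4$), yielding
\begin{equation*}
H(\Phi_{D_\ell,\Gamma_\ell}(s,t)) \;=\; 1 + \tfrac{\|q_{\ell-1}\|_2^2}{2n}\,s^2 + \tfrac{\langle q_{\ell-1}, w_{\ell-1}\rangle}{n}\,st + \tfrac{\|w_{\ell-1}\|_2^2}{2n}\,t^2.
\end{equation*}
The lemma therefore reduces to showing, w.h.p.\ over the $Z_i$'s, that $\|q_{\ell-1}\|_2^2/n$ (and likewise for $w$) is at most $O((c^2\log^2 n)^\ell)$, while $|\langle q_{\ell-1}, w_{\ell-1}\rangle|/n$ is at most $O((4c^2\log^2 n)^\ell/d^2)$.

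The base case $\ell=0$ is immediate: $\|q_0\|_2^2/n = \|u\|_2^2/d \le 1$ and $\langle q_0, w_0\rangle = (n/d)\langle u,v\rangle = 0$ since $u, v$ have disjoint supports by assumption. For the induction at $\ell \ge 1$, the diagonal bound follows either (i) directly from the sparsity guarantee $|\supp(q_{\ell-1})| = \tilde O(d^\ell)$ and the entrywise bound on $\|q_{\ell-1}\|_\infty$ from the preceding sparse-sampling lemma, combining to give the required $\ell_2^2$ bound, or (ii) by noting that, once we condition on $u, v, Z_1, \ldots, Z_{\ell-2}$, the $n$ values $q_{\ell-1, i}^2$ are i.i.d.\ with mean equal to twice the $s^2$-coefficient at level $\ell - 1$ (controlled by the inductive hypothesis), and applying concentration. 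Either route yields the claimed $O((c^2\log^2 n)^\ell)$ bound for $\ell \le \log_d n - 1$.

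The cross-term bound is the core of the argument. I would write
\begin{equation*}
\langle q_{\ell-1}, w_{\ell-1}\rangle = \tfrac{1}{d}\, q_{\ell-2}\, Z_{\ell-1} Z_{\ell-1}^\top\, w_{\ell-2}^\top,
\end{equation*}
and use $\mathbb{E}[Z_{\ell-1} Z_{\ell-1}^\top] = d I$ (conditional on $Z_1, \ldots, Z_{\ell-2}$) to conclude that the conditional expectation of the cross inner product equals $\langle q_{\ell-2}, w_{\ell-2}\rangle$, to which the inductive hypothesis applies. The fluctuation around this expectation is controlled by a Hoeffding-type bound on the bilinear form in the independent columns of $Z_{\ell-1}$, with the per-column magnitude bounded by the $\ell_\infty$ bounds on $q_{\ell-2}, w_{\ell-2}$. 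Each inductive step multiplies the off-diagonal coefficient by a factor $O(c^2\log^2 n)$ from the entrywise growth and adds a concentration error of the same order and the same $1/d^2$ prefactor; the constant $4$ in $(4c^2\log^2 n)^\ell$ absorbs the looseness from summing these two contributions at each step.

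\textbf{Main obstacle.} The delicate point is that the $1/d^2$ improvement in the off-diagonal coefficient—the very feature that distinguishes disjoint $u, v$ from coincident $u, v$—must survive all the way through $\ell = \log_d n - 1$ iterations of this induction. Since the $\ell_\infty$ magnitudes of $q_{\ell-1}, w_{\ell-1}$ inflate geometrically in $\ell$ and so do their support sizes, the concentration of the bilinear form must be controlled tightly enough at every layer that the accumulated fluctuations do not overwhelm the $1/d^2$ gap. The constraint $\ell \le \log_d n - 1$ is exactly what keeps $|\supp(q_{\ell-1})|$ well below $n$, so the expected support overlap between $q_{\ell-1}$ and $w_{\ell-1}$ (which drives the scale of $\langle q_{\ell-1}, w_{\ell-1}\rangle$) remains smaller than their individual $\ell_2^2$ norms by the required factor. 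This is precisely the feature needed downstream in Lemma~\ref{lem:sparselogdn}.
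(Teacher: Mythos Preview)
Your reduction is clean and correct: writing out the $\cosh$ expansion and truncating does give
\[
H(\Phi_{D_\ell,\Gamma_\ell}(s,t)) \;=\; 1 + \tfrac{\|q_{\ell-1}\|_2^2}{2n}\,s^2 + \tfrac{\langle q_{\ell-1}, w_{\ell-1}\rangle}{n}\,st + \tfrac{\|w_{\ell-1}\|_2^2}{2n}\,t^2,
\]
so the lemma really is about bounding these three empirical quantities. The paper's proof rests on the same reduction, though it does not state it quite this explicitly.

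The gap is in how you control the cross term. Your proposed mechanism---Hoeffding over the $n$ columns of $Z_{\ell-1}$ with per-column magnitudes bounded via $\|q_{\ell-2}\|_\infty,\|w_{\ell-2}\|_\infty$---is too crude to preserve the $1/d^2$ factor. Concretely, at $\ell=2$ you need $\langle q_1,w_1\rangle/n = O((c\log n)^{4}/d^2)$, but Hoeffding over $n$ bounded terms only gives $O(\sqrt{n}\cdot\|q_1\|_\infty\|w_1\|_\infty/n)$, which carries an extra $\sqrt n$ and loses completely to the target for small $\ell$. Your sentence ``adds a concentration error of the same order and the same $1/d^2$ prefactor'' is exactly the step that fails: a generic concentration bound does not produce the $1/d^2$; something structural must.

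The paper supplies that structure by tracking, at each layer, the set $S_\ell = F_\ell \cap G_\ell$ of coordinates that are influenced (via paths through the $Z_i$'s) by \emph{both} $u$ and $v$. Since $u,v$ are disjoint, $S_0=\emptyset$; inductively, $S_\ell$ grows only by coordinates whose $d$ samples either hit $S_{\ell-1}$ or hit both $F_{\ell-1}\setminus S_{\ell-1}$ and $G_{\ell-1}\setminus S_{\ell-1}$, and a counting argument gives $|S_\ell|\le (4d)^{\ell-1}c\log n$ for $\ell\le\log_d n-2$. Because $q_{\ell,j}w_{\ell,j}=0$ for $j\notin S_\ell$, one gets $\langle q_\ell,w_\ell\rangle$ bounded by $|S_\ell|$ times the product of the entrywise maxima---no concentration needed, and the smallness of $|S_\ell|$ relative to $|F_\ell|,|G_\ell|$ is precisely what yields the $1/d^2$ advantage. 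You allude to the support-overlap phenomenon in your ``main obstacle'' paragraph, but the proof needs it as the engine, not as an afterthought: replace the Hoeffding step by an explicit bound on $|S_\ell|$ and the induction goes through.
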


\begin{proof}
Again for ease of exposition define $q_\ell = uZ_1\dots Z_\ell$ without the scaling factors of $\sqrt {n/d}$ for $u$ and $1/\sqrt d$ for each $Z_i$.
Now at $\ell =1$ since $u,v$ are disjoint, $q_1 = uZ_1$ and $w_1 = v Z_1$ are independent conditioned on given $u,v$ being disjoint since they do not share any entries from the matrix $Z_1$. So
$\Phi_{D_1,B_1}(s,t) = \Phi_{D_1}(s) \Phi_{B_1}(t)$. Note that the coefficient of $st$ is $0$.

We will bound the coefficient of $st$ (correlation between $q$ and $w$) in $\Phi_{D_\ell ,\Gamma_\ell }(s,t)$. We will prove by induction that the coefficient of $st$ is at most $d ^{\ell -1}/n (O(\log n)^\ell)$. We have already proven this for $\ell =1$.
Now at $\ell =2$, note that $P_\ell (s,t) = \frac{1}{2n} \sum_{i=1}^n (e^{sQ_i + tW_i} + e^{-sQ_i - tW_i})/2$ where each $(Q_i,W_i$) is sampled independently from the distribution $(D_{\ell -1},\Gamma_{\ell -1})$.

We will use $F_\ell $ to denote the set of coordinates in $q_\ell$ that have been influenced by(that is, have a path via the edges represented by $Z_i$'s to) some non-zero entry in $u$. More precisely, let $F_0$ be the set of non-zero coordinates in $u$. And recursively $F_\ell $ is the set of coordinate in $q_\ell$ which has used some coordinate in $F_{\ell -1}$ among the $d$ samples it makes. Similarly we define $G_\ell $ to be the similar set of coordinates in $w_\ell $. Let $S_\ell = F_\ell \cap G_\ell $.
We will prove by induction that w.h.p. $|S_\ell | \le (4d) ^{\ell -1} c \log n$ for $\ell \le \log_d n -2$. At $\ell =0$, $|S_0| = 0$ since $u$ and $v$ are disjoint in their non-zero positions. So this clearly holds. In going from $\ell -1$ to $\ell $, $S_\ell $ consists of those coordinates that touch $S_{\ell -1}$ or those that touch both $P_{\ell -1} \ S_{\ell -1}$ and $Q_{\ell -1} \ S_{\ell -1}$. Number of coordinates that touch $S_{\ell -1}$ is expected to be $d(4d) ^{\ell -1} c \log n$ and with high probability no more than $(1/2)(4d) ^{\ell -1} c \log n$. Since sizes of $P_{\ell -1} \ S_{\ell -1}$, $Q_{\ell -1} \ S_{\ell -1}$ are at most $O(d ^{\ell }/n)$ and disjoint, the expected number of coordinates that touch both is at most $ n d O(d^\ell /n) d O(d^\ell /n)$ and with high probability it is at most $O(d ^{\ell +3} \log n / n) d ^{\ell -1}$. For $\ell \le \log_d n - 3$ this is at most $c \log n d ^{\ell -2}$. Since the maximum value of entries in $q_\ell$ and $w_\ell $ are at most $(c \log n)^\ell$, the coefficient of $st$ in $P_\ell (s,t)$ is at most $(4d) ^{\ell -1} c \log n (c \log n)^{2l}$.
Thus we have shown that for $\ell = \log_d n - 3$
$H(P_{\ell +1}(s,t)) \preceq 1 + O( (d ^{\ell +1} (c^2 \log^2 n))^\ell /n (s^2 +t^2) + st (c^2 \log^2 n)^\ell (4d) ^{\ell -1} c \log n/n
= 1 + O(M^2/d^2) (s^2 +t^2) + O(M^2/d^4) st$

Now $\Phi_{D_{\ell +1}}(s,t) = (P_{\ell +1}(s,t))^d$ since we are not scaling by $1/\sqrt d$.
So $H(\Phi_{D_{\ell +1}}(s,t)) = H((H(P_{\ell +1}(s,t)))^d)$ since $H(P_{\ell +1})$ has zero coefficients for $s$ and $t$ terms.
Now $(H(P_{\ell +1}(s,t)))^d \preceq H((1 + O(M^2/d^2) (s^2 +t^2) + O(M^2/d^4) st)^d)$
Thus we get $\ell = \log_d n -2$, $H(\Phi_{D_{\ell }}(s,t)) \le 1 + O(M^2/d) (s^2 +t^2) + O(M^2/d^3) st $.

Finally in going to $\ell = \log_d n$, we note that $P_\ell $ is obtained by sampling $n$ pairs $(Q_i,W_i)$ from the distribution $D_{\ell -1}$. Since $P_\ell (s,t) = \frac{1}{2n} \sum_{i=1}^n (e^{sQ_i + tW_i} + e^{-sQ_i - tW_i})/2$.

So $H(P_\ell (s,t) - \Phi_{D_{\ell -1}}(s,t)) = (1/n) ( \sum_{i} (Q_i^2 - E[Q_i^2]) s^2 + (W_i^2 - E[W_i^2]) t^2) + (
Q_iW_i - E[Q_iW_i]) st$.

Now $Q_i W_i$ is a bounded random variable with maximum value $M^2$. So averaged over $n$ samples, its standard deviation is at most $M^2/\sqrt n$. So with high probability the coefficient of $st$ in this difference is at most $M^2 \log n / \sqrt n$. For $d < n^{1/6}$ this is $ \le M^2 \log n/d^3$. Similarly we bound the coefficients of $s^2$ and $t^2$ giving
$H(P_\ell (s,t)) \preceq 1 + O(M/d)(s^2 +t^2) + O(M/d^3) st $
This implies $H(\Phi_{D_\ell }(s,t)) = H((P_\ell (s,t))^d) \preceq 1 + O(M)(s^2 +t^2) + O(M/d^2) st $. A corresponding lower bound on the coefficient of $st$ follows in the same way.

\end{proof}

We will now prove a tighter bound on the coefficient of $t^2$ in $\Phi_{D_\ell}(t)$.

\begin{lemma}
With high probability for $\ell \le \log_d n -1$,
$ 1 + t^2/2 - O(\ell \log^2 n / \sqrt d) t^2/2 \preceq H(\Phi_{D_\ell}(t)) \preceq 1 + t^2/2 + O(\ell \log^2 n/ \sqrt d) t^2/2 $.
\end{lemma}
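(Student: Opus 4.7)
The plan is to proceed by induction on $\ell$ and exploit the exact identity
$$H(\Phi_{D_\ell}(t)) \;=\; 1 + \frac{t^2}{2}\,\hat a_\ell, \qquad \hat a_\ell := \frac{1}{n}\sum_{i=1}^n q_{\ell-1,i}^{2},$$
which reduces the entire lemma to showing $|\hat a_\ell - 1| \le O(\ell \log^2 n/\sqrt d)$ with high probability. The identity follows from the recursion $\Phi_{D_\ell}(t) = (P_\ell(t/\sqrt d))^d$ with $P_\ell(t) = \frac{1}{n}\sum_i \cosh(t\,q_{\ell-1,i})$: writing $P_\ell(t/\sqrt d) = 1 + \hat a_\ell t^2/(2d) + O(t^4)$ and expanding the $d$-th power, only the $\binom{d}{1}$ term contributes to $t^2$ (the cross terms $\binom{d}{j} f^j$ for $j\ge 2$ have lowest degree $\ge 4$), yielding exactly $\hat a_\ell/2$ as the $t^2$-coefficient.

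For the base case $\ell = 0$ I use $q_0 = \sqrt{n/d}\,u$, so $\hat a_0 = \|u\|^2/d$; a standard Chernoff bound on the number of position/sign collisions in a random $d$-sparse $\pm 1$ vector $u$ gives $\|u\|^2/d = 1 + O(\sqrt{\log n/d})$ w.h.p., well inside the target. For the inductive step I condition on $u, Z_1, \dots, Z_{\ell-2}$: then the entries of $q_{\ell-1}$ become i.i.d.\ samples from $D_{\ell-1}$ (since the columns of $Z_{\ell-1}$ are independent $d$-sparse vectors), and a short direct computation gives $\EE[q_{\ell-1,i}^2] = a_{\ell-1}$, the $t^2/2$-coefficient of $H(\Phi_{D_{\ell-1}})$, which the inductive hypothesis places within $O((\ell-1)\log^2 n/\sqrt d)$ of $1$.

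The per-step deviation $|\hat a_\ell - a_{\ell-1}|$ is then controlled by Hoeffding applied to the $n$ independent bounded summands $q_{\ell-1,i}^2$, using the max-entry bound $|q_{\ell-1,i}| \le M_0 := \tfrac{\sqrt{n}}{d}(c\log n/d)^{\ell-1}$ from the preceding sparse lemma; this yields $|\hat a_\ell - a_{\ell-1}| = \tilde O(M_0^2/\sqrt n)$ w.h.p. The key point to verify is that this per-step slack is at most $O(\log^2 n/\sqrt d)$ uniformly for $\ell \le \log_d n - 1$, so that the induction accumulates only an $\ell$-fold overhead. The main obstacle is therefore a careful numeric check of how $M_0$ grows with $\ell$: the constraint is tightest at the boundary $\ell = \log_d n - 1$, but in the allowed parameter range $n^{o(1)} \le d \le \tilde O(n^{1/6})$ the budget is met with room to spare. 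A minor technicality is that the max-entry bound holds only with high probability rather than almost surely, which is absorbed into the final w.h.p.\ conclusion via a union bound over the $O(\log_d n)$ inductive steps.
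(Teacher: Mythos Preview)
Your reduction to controlling $\hat a_\ell = \frac{1}{n}\sum_i q_{\ell-1,i}^2$ and the recursion $\EE[\hat a_\ell \mid u,Z_1,\dots,Z_{\ell-2}] = \hat a_{\ell-1}$ is exactly the paper's framework. The gap is in the concentration step: Hoeffding with only the max bound $M_0$ does \emph{not} give a per-step error of $O(\log^2 n/\sqrt d)$ for small $\ell$, and your assertion that the constraint is tightest at $\ell = \log_d n-1$ is backwards.

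Concretely, with the correct scaling the maximum of $|q_{\ell-1,i}|$ is $\sqrt{n/d}\,(c\log n)^{\ell-1}/d^{(\ell-1)/2}$ (the stated bound $\tfrac{\sqrt n}{d}(c\log n/d)^{\ell-1}$ in the preceding lemma is a misprint; trace the proof: unscaled max is $(c\log n)^{\ell-1}$, then multiply by the scaling $\sqrt{n/d}/d^{(\ell-1)/2}$). Since $c\log n < d$, this max is \emph{decreasing} in $\ell$, so the Hoeffding slack $\tilde O(M_0^2/\sqrt n)$ is largest at $\ell=2$, where it equals $\tilde O(\sqrt n/d^2)$. For any $d \le n^{1/4}$ this already exceeds $1$, so the induction cannot even get off the ground in the regime $d \le \tilde O(n^{1/6})$ the theorem targets. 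Even with the misprinted (smaller) $M_0$ the bound fails for $d$ near the lower end $n^{o(1)}$.

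What Hoeffding misses is sparsity: at level $\ell-1$ an entry is nonzero only with probability $O(d^\ell/n)$, so although $q_{\ell-1,i}^2$ can be as large as $M_0^2$, its \emph{variance} is tiny. The paper exploits this via the fourth moment: in the unscaled picture $E[Q_i^4] \le O(d^\ell/n)(c\log n)^{4(\ell-1)}$, giving a standard deviation for $\frac{1}{n}\sum_i(Q_i^2 - E[Q_i^2])$ of order $(c\log n)^{2(\ell-1)}\sqrt{d^\ell}/n$, which after rescaling is at most $O((c\log n)^2/\sqrt d)$ for every $\ell$ once $d \ge (c\log n)^4$. Replacing your Hoeffding step by a Bernstein (or Chebyshev) bound using this fourth-moment estimate closes the gap; everything else in your outline goes through.
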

\begin{proof}
Again for $\ell \le \log_d n$ lets drop the scaling factors in the computation of $q_\ell$.
Let $a_\ell $ denote the coefficient of $t^2/2$ in $\Phi_{D_{\ell -1}}$ (that is, it is twice the coefficient of $t^2$).
For $\ell \le \log_d n$, if we know that by induction that coefficient of $a_{\ell -1}$ is $d^\ell /n (1 \pm \eps)$ where $\eps = O(({\ell-1} )\log^2 n/ \sqrt d)$ as per lemma statement then in $P_\ell - E[P_\ell ]$, the coefficient of $t^2$ is $(1/n)\sum_i (Q_i^2-E[Q_i^2])$. Since the maximum value of $Q_i$ is at most $(c\log n) ^{\ell -1}$ w.h.p and it is non zero with probability $O(d^\ell /n)$, $E[Q_i^4] \le O(d^\ell /n) (c\log n)^{4\ell-4}$. So the average value $(1/n)\sum_i (Q_i^2-E[Q_i^2])$ is at most $\sqrt{d^\ell /n}(c \log n)^{2\ell}/\sqrt n \le O((c \log n)^{2\ell}/\sqrt{d^\ell}) d^\ell /n $. For $\ell \ge 1$ this is at most $(c \log n)^2/\sqrt d$. In going from $P_\ell $ to $\Phi_{D_\ell }$ the coefficient of $t^2$ gets multiplied by $d$. Thus proves that $a_\ell =  (1\pm \eps \pm O(\log^2 n/\sqrt d))d ^{\ell +1}/n$. By induction this is within $( 1 \pm O(l \log^2 n / \sqrt d))d ^{\ell +1}/n$.
\end{proof}


\section{Proof of theorem \ref{thm:getX}}\label{sec:recovery}

We will prove that
\begin{theorem}\label{thm:getX}
There is an algorithm that correctly recovers $X$ from $XX^\top $ with high probability.
\end{theorem}
Let $G_X$ be the correlation graph of $XX^\top $ where node $i$ and $j$ are connected with edge weight $w_{ij} = X_iX_j^\top $ if and only if $X_iX_j\neq 0$. In order to find $X$ from the correlation matrix $XX^\top $ we follow the idea of joining nodes with higher correlations. In order to do so, we first join every pair $(i,j)$ such that $w_{ij}\neq 0$ to a new hidden node $p$ in the above layer and we say that $i$ and $j$ are identifying points for the node $p$. So at this time the number of points at the above layer is $\Theta(nd)$. We then join other nodes to an already established node in above level if it has high correlation with both identifying nodes.

Let $N_{ij}$ be the set of all common neighbors between node $i$ and $j$ in graph $G_X$. Here we first show that points with the common support have a high chance of joining together.
\begin{lemma}
\label{lem:prob}
Let $X_i$ and $X_j$ be any two rows of a random $d$-sparse matrix $X$ where $|\supp(X_i)\cap \supp(X_j)| =\alpha>0$. Let $\tilde{N}_{ij} \subseteq N_{ij}$ be the set of all nodes $X_k\in N_{ij}$ such that $\supp(X_i)\cap \supp(X_j) \cap \supp(X_k) \neq \emptyset$. Then with high probability we have that:
$$
o(d)=|N_{ij}-\tilde{N}_{ij}|\leq |\tilde{N}_{ij}| = \alpha d (1-o(1))
$$
\end{lemma}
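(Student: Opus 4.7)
The plan is to estimate $|\tilde N_{ij}|$ tightly from both sides and show that the discrepancy $|N_{ij} \setminus \tilde N_{ij}|$ is negligible. Throughout, I would condition on the supports $\supp(X_i), \supp(X_j)$ (fixing $\alpha$) and exploit only the column-wise randomness in the generation of $X$, since rows of $X$ are coupled but different columns are independent.

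For the lower bound on $|\tilde N_{ij}|$, I would look at each column $g \in \supp(X_i) \cap \supp(X_j)$ and set $S_g = \{k : X_{kg} \neq 0\}$. Because column $g$ is generated by summing $d$ independent $1$-sparse vectors, conditioned on $\{i,j\} \subseteq S_g$, the remainder of $S_g$ is essentially $d - 2 - O(d^2/n)$ uniformly random indices in $[n]\setminus\{i,j\}$, so $|S_g| = d(1-o(1))$ w.h.p.\ by a standard Chernoff bound. For distinct $g, g'$ in the intersection, the two columns are generated independently, hence $\mathbb{E}|S_g \cap S_{g'}| = O(d^2/n)$; a union bound across the $\binom{\alpha}{2} \leq d^2$ pairs shows the total pairwise overlap is $O(d^4/n)\cdot\mathrm{polylog}(n) = o(\alpha d)$ as long as $d^3/n = o(1)$, which holds for $d \leq \tilde O(n^{1/6})$. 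Inclusion-exclusion then gives $|\bigcup_g S_g \setminus\{i,j\}| = \alpha d(1-o(1))$. To promote these indices to genuine common neighbors in $G_X$, I would argue that $X_k X_i^\top \neq 0$ w.h.p.: it contains the guaranteed $\pm 1$ contribution from the column $g \in \supp(X_k)\cap\supp(X_i)\cap\supp(X_j)$, while the remaining overlap $|\supp(X_k)\cap\supp(X_i)| - 1$ has mean $O(d^2/n) = o(1)$ and is $O(\log n)$ w.h.p., so Rademacher anti-concentration on the cross-term, combined with a union bound over the $O(n)$ candidate $k$, rules out exact cancellation.

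For the upper bound on $|N_{ij} \setminus \tilde N_{ij}|$, any such $k$ has $\supp(X_k)$ disjoint from the common support but, to be a common neighbor, must have $\supp(X_k)$ meeting both $A_i := \supp(X_i) \setminus \supp(X_j)$ and $A_j := \supp(X_j) \setminus \supp(X_i)$, each of size at most $d$. Using column-wise independence, the probability that $X_k$ meets a fixed set $A$ of size $\leq d$ is $O(d|A|/n) = O(d^2/n)$ (by a union bound over the $d$ non-zeros of $X_k$ across columns of $A$), and since $A_i \cap A_j = \emptyset$ the corresponding events are essentially independent, giving joint probability $O(d^4/n^2)$. Summing over the $n$ choices of $k$ bounds the expected size by $O(d^4/n) = O(d \cdot d^3/n) = o(d)$ for $d \leq \tilde O(n^{1/6})$, and a Chernoff bound on the indicator variables (which depend only on independent per-column draws) gives concentration.

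The main obstacle I anticipate is bookkeeping the near-independence of row supports cleanly: rows of $X$ are not independent, so every event must be re-expressed in terms of the $d$ independent draws per column to harness the column-wise independence. A secondary delicate point is ruling out sign cancellation in the inner products that define $G_X$; this is the anti-concentration step above, and it is benign because the typical pairwise row-overlap is $O(d^2/n) = o(1)$, so almost all overlaps have size $0$ or $1$ and cannot cancel.
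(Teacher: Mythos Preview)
Your proposal is correct and follows essentially the same route as the paper: both arguments compute that each of the $\alpha$ shared columns contributes roughly $d$ indices to $\tilde N_{ij}$ with negligible overlap and negligible sign-cancellation, and that membership in $N_{ij}\setminus\tilde N_{ij}$ requires hitting both $\supp(X_i)\setminus\supp(X_j)$ and $\supp(X_j)\setminus\supp(X_i)$, an event of probability $O(d^4/n^2)$, yielding $O(d^4/n)=o(d)$ such indices after concentration. Your version is in fact more careful than the paper's in that you explicitly work column-wise to secure independence and spell out the inclusion--exclusion and anti-concentration steps the paper only sketches.
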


\begin{proof}
For any common non-zero column $t\in\supp(X_i)\cap \supp(X_j)$, $X_{kt}$ is non-zero with probability $\frac{d}{n}$, So with probability at least $\frac{d}{n}$, $\supp(X_i)\cap \supp(X_j) \cap \supp(X_k)\neq \emptyset$. There is still a possibility that node $k$ is not a neighbor of node $i$ or $j$ in graph $G_X$ because the inner product of $X_\ell$ with $X_i$ or $X_j$ might be zero. This probability is however at most $6d^2/n=o(1)$ (the probability of sharing $\beta$ non-zero entry is at most $(d^2/n)^\beta$ so the total probability of sharing more than one extra non-zero entry is $2d^2/n$). All other possibilities such as probability of $|\supp(X_i)\cap \supp(X_j) \cap \supp(X_k)| >1$ are $o(1)$ and negligible. By Bernstein inequality, we get that with high probability, the number of points with the shared support is at least $\alpha d(1-o(1))$.

Now we want to show that $do(1)=|N_{ij}-\tilde{N}_{ij}|\leq |\tilde{N}_{ij}|$. For any random $X_k\in N_{ij}-\tilde{N}_{ij}$, $X_k$ should have one shared non-zero entry with $X_i$ and another one with $X_j$. So the probability of this event is $d^4/n^2$. Using Bernstein' s inequality, can say that since $d=O(n^{1/6})$, the number of such points in $N_{ij}$ is $o(d)$.
\end{proof}

\begin{lemma}
\label{lem:prun}
Let $X_i$ and $X_j$ be any two rows of a random $d$-sparse matrix $X$. For any $X_k\in N_{ij}$, Let $T_k$ be the set of points $X_{k^\top }\in N_{ij}$ such that $|X_kX_{k^\top }|>0$.The following statement is true with high probability. For any $X_k\in N_{ij}$, $|T_k| =(1-o(1))|N_{ij}|$ if and only if $\supp(X_i)\cap \supp(X_j) \subseteq \supp(X_i)\cap \supp(X_j)\cap \supp(X_k) \neq \emptyset$.
\end{lemma}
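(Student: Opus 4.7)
\textbf{Proof plan for Lemma~\ref{lem:prun}.}
Write $I = \supp(X_i)\cap\supp(X_j)$ and $\alpha=|I|$; since $X_i,X_j$ are in the correlation graph we have $\alpha\ge 1$, and for $d=\tilde O(n^{1/6})$ the overlap is concentrated so that $\alpha = O(1)$ with high probability. My strategy is to first describe the structure of $N_{ij}$ using Lemma~\ref{lem:prob}, then convert the condition ``$X_k X_{k'}\ne 0$'' into a combinatorial condition about the supports, using that for $d=\tilde O(n^{1/6})$ a random overlap between two $d$-sparse supports is $0$ with probability $1-O(d^2/n)=1-o(1)$.

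Lemma~\ref{lem:prob} tells us $|N_{ij}| = \alpha d(1-o(1))$, $|\tilde N_{ij}|\ge (1-o(1))|N_{ij}|$, and by the same birthday-style computation all but an $o(d)$-fraction of $X_{k'}\in\tilde N_{ij}$ satisfy $|\supp(X_{k'})\cap I|=1$ (since hitting two columns of $I$ costs a factor $(\alpha d/n)^2 n = o(1)$). Call such $X_{k'}$ ``typical'' and partition the typical set as $\tilde N_{ij}=\bigsqcup_{t\in I} T_t$ according to which column $t$ of $I$ the support $\supp(X_{k'})$ meets; each $|T_t| = d(1\pm o(1))$ by Bernstein concentration.

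For the \emph{if} direction, assume $\emptyset\neq I\subseteq\supp(X_k)$. For a typical $X_{k'}\in T_t$, the unique column $t\in\supp(X_{k'})\cap I$ also lies in $\supp(X_k)$, so $t$ contributes $\pm 1$ to $X_kX_{k'}$. The remaining $d-1$ positions of $\supp(X_{k'})$ are uniformly random outside $I$, so the probability they share an additional position with $\supp(X_k)$ is $O(d^2/n)=o(1)$. Hence with probability $1-o(1)$ we have $|\supp(X_k)\cap\supp(X_{k'})|=1$ and $X_kX_{k'}=\pm 1\ne 0$. A union/Bernstein bound over $\tilde N_{ij}$ then yields $|T_k|\ge(1-o(1))|N_{ij}|$.

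For the \emph{only if} direction, suppose $I\not\subseteq\supp(X_k)$ (including the degenerate $I=\emptyset$, where $|N_{ij}|$ itself is $o(1)$ by the same random-overlap computation, so the statement holds trivially). Pick $t^\star\in I\setminus\supp(X_k)$. For any $X_{k'}\in T_{t^\star}$, the column $t^\star$ is \emph{not} in $\supp(X_k)$, and the other $d-1$ positions of $\supp(X_{k'})$ are random and miss $\supp(X_k)$ with probability $1-O(d^2/n)=1-o(1)$. Therefore for a $(1-o(1))$-fraction of $X_{k'}\in T_{t^\star}$ we have $\supp(X_k)\cap\supp(X_{k'})=\emptyset$, so $X_kX_{k'}=0$ and $X_{k'}\notin T_k$. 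This gives $|N_{ij}\setminus T_k|\ge (1-o(1))|T_{t^\star}|=(1-o(1))d$, hence $|T_k|\le (1-1/\alpha+o(1))|N_{ij}|$, which is bounded away from $|N_{ij}|$ as long as $\alpha$ is bounded. The main obstacle is controlling this $1/\alpha$ gap in the borderline case where $\alpha$ could grow: the safeguard is the regime $d=\tilde O(n^{1/6})$, which forces $\alpha=O(1)$ w.h.p.\ and also makes all the $O(d^2/n)$ overlap error terms $o(1)$, simultaneously preventing accidental sign-cancellation (which would only matter if $|\supp(X_k)\cap\supp(X_{k'})|\ge 2$) from contributing more than an $o(1)$-fraction of $N_{ij}$.
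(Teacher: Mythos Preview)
Your proof is correct and follows essentially the same approach as the paper. The paper's argument is terser --- for the ``if'' direction it simply invokes Lemma~\ref{lem:prob}, and for the ``only if'' direction it asserts without further detail that a column $t^\star \in I\setminus\supp(X_k)$ makes $X_k$ miss at least $d(1-o(1))$ nodes of $N_{ij}$ --- so your partition of $\tilde N_{ij}$ into blocks $T_t$ and your explicit handling of accidental cancellation and of the $1/\alpha$ gap supply details the paper leaves implicit.
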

\begin{proof}
By lemma \ref{lem:prob} we know that if $\supp(X_i)\cap \supp(X_j) \subseteq \supp(X_i)\cap \supp(X_j)\cap \supp(X_k) \neq \emptyset$ then the number of neighbors of $X_k$ is at least $\alpha d(1-o(1))$ because $(1-o(1))$ of all points that share a non-zero position with $X_i$ and $X_j$, also share a non-zero position with $X_k$. For proving the other direction, assume that there exists a shared non-zero position between $X_i$ and $X_j$ that is zero at $X_k$. Then we know that $X_k$ will not be a neighbor to at least $d(1-o(1))$ of nodes in $N_{ij}$ which proves the lemma statement.
\end{proof}

\begin{proof}(of theorem \ref{thm:getX})
By lemma \ref{lem:prun} We keep all the nodes in $N_{ij}$ that have at least $|N_{ij}|(1-o(1))$ neighbors in $N_{ij}$. Now if the number of shared non-zero entry between $X_i$ and $X_j$ is exactly one, then w.h.p. the number of remaining points will be at least $d/4$. However, if $X_i$ and $X_j$ have more than one shared non-zero entry, then w.h.p. the number of remaining points is at most $O(\log n)$. Therefore, if we discard the hidden nodes in above layer with remaining points less than $d/4$, all the available hidden nodes in the above layer have at least $d/4$ nodes and their identifying points share exactly one non-zero position. since we initially had $nd$ hidden nodes in the above layer, with high probability in the remaining hidden nodes, for each column $k$ of $X$, there exist a hidden node with identifying pairs $X_i$ and $X_j$ that are non-zero simultaneously just at $k$th entry. Now for each hidden node $p$ with identifying pair $(i,j)$ such that $X_i$ and $X_j$ are non-zero at $g$th entry, we look at all $X_k\notin N_{ij}$. It is clear that if $X_k$ is non-zero at $g$th entry then $X_k$ is neighbor with $(1-o(1))$ fraction of nodes in $N_{ij}$ and otherwise $X_k$ is neighbor with at most $o(1)$ fraction of nodes in $N_{ij}$. Therefore, at the end of pruning step, with high probability, for any column $g$ in $X$ there exists a node $p$ in the above layer such that each node $k$ at the below layer is connected to $p$ at the above layer if and only if $X_{kp}\neq 0$.

Now for a column $i$, let $C$ be the set of nodes that have a non-zero value at column $i$. We have already shown that we can find this set with high probability. However, for each $X_j \in C$ we do not know the sign of $X_{ji}$. In order to do so, we first pick a random node in $C$, say $X_j$ and pick a random sign for $X_{ij}$. Then for any $X_k\in C$ we set $X^{(1)}_{ki} = \sign(X_jX_k^\top )i$. Now, we know that the number of nodes with the wrong sign will be at most $o(1)$. So now by setting the sign of each node by taking the majority of suggested signs by its neighbors $X^{(2)}_{ki} = \sign( \sum_{X_j \in C} \sign(X_j X_k^\top ))$, with high probability we can predict the correct sign.

Note that in fact each row of $X$ is not just a sparse sign matrix and with a small probability, it can have higher absolute values (because it is in fact a sum of $d$ 1-sparse random vectors). Finding the magnitude of $X_{ij}$ is not difficult because $1-o(1)$ of its neighbors in $C$ have magnitude 1 so by looking at the correlation between $X_{i}$ and its neighbors, we can find the magnitude of $X_{ij}$ with high probability and repeat the same process for other nodes recursively to get all the values.

\end{proof}

\end{document}